\newcommand\BibTeX{{\rmfamily B\kern-.05em \textsc{i\kern-.025em b}\kern-.08em
T\kern-.1667em\lower.7ex\hbox{E}\kern-.125emX}}
\DeclarePairedDelimiter{\ceil}{\lceil}{\rceil}
\newtheorem{definition}{Definition}
\newtheorem{lemma}{Lemma}
\newtheorem{theorem}{Theorem}
\DeclareMathOperator{\rank}{rank}
\newtheorem{corollary}{Corollary}
\newtheorem*{remark}{Remark}
\DeclarePairedDelimiter{\norm}{\lVert}{\rVert}
\newtheorem{assumption}{Assumption}
\newcommand{\col}{{\rm col\;}}
\def\qed{ \rule{.1in}{.1in}}
\definecolor{mygreen}{rgb}{0,0.6, 0}
\begin{document}

\runninghead{Jin et al.}

\title{Inverse Optimal Control from Incomplete Trajectory Observations}

\author{Wanxin Jin\affilnum{1}, Dana Kuli\'c\affilnum{2}, Shaoshuai Mou\affilnum{1}and Sandra Hirche\affilnum{3}}

\affiliation{
\affilnum{1}School of Aeronautics and Astronautics, Purdue University, 47906 IN, USA. Email: wanxinjin@gmail.com, mous@purdue.edu\\
\affilnum{2}Monash University, Melbourne VIC 3800, Australia.\\
\affilnum{3}Chair of Information-oriented Control, Technical University of Munich,  D-80290 Munich, Germany.
}

\corrauth{Wanxin Jin, School of Aeronautics and Astronautics, Purdue University, 47906 Indiana, USA}

\email{wanxinjin@gmail.com}

\begin{abstract}
This article develops a methodology that enables learning an objective function of an optimal control system from  incomplete trajectory observations. The objective function  is assumed  to be a weighted sum of  features (or basis functions) with unknown weights, and the observed data is a segment of a  trajectory of  system states and inputs. 
The  proposed technique introduces the concept of the recovery matrix to establish the relationship between any available  segment of the  trajectory and the  weights of given candidate features. The rank of the recovery matrix indicates whether a subset of relevant features can be found among the candidate features and the corresponding weights can be learned from the segment data. The recovery matrix can be obtained iteratively and its rank  non-decreasing property shows that additional observations may contribute to the objective learning. Based on the  recovery matrix, a method for using incomplete trajectory observations to learn the weights of selected features is established, and an incremental inverse optimal control algorithm is developed  by automatically finding the minimal required observation. The effectiveness of the proposed method is demonstrated on a linear quadratic regulator system and a simulated robot manipulator.

\end{abstract}

\keywords{Inverse optimal control, inverse reinforcement learning, incomplete trajectory observations, objective learning}

\maketitle

\section{Introduction}
Inverse optimal control (IOC), also known as inverse reinforcement learning, solves a problem of {finding an objective   (e.g., cost or reward) function to explain behavioral observations of an optimal control system}
\citep{kalman1964linear,ng2000algorithms}. Successful applications {of IOC techniques} are broad,  including imitation learning \citep{abbeel2010autonomous,kolter2008hierarchical}, where a learner mimics  an expert by inferring an objective function from the expert's demonstrations, autonomous driving \citep{kuderer2015learning}, where  human driving preference is learned and transferred to a vehicle controller, human-robot shared systems \citep{mainprice2015predicting,mainprice2016goal}, where the  intentionality of a human partner is estimated to enable  motion prediction and smooth coordination, and  human motion analysis \citep{jin2019inverse,lin2016human}, where  principles of human motor control are investigated.

The most common strategy used in IOC is to parametrize an unknown objective function as a weighted sum of  relevant features (or basis functions) with unknown weights \citep{ng2000algorithms,abbeel2004apprenticeship,ratliff2006maximum,ziebart2008maximum,mombaur2010human,englert2017inverse}. Different approaches have been developed to estimate the weights of the features  given a  \emph{full observation} of the system's optimal trajectory over a complete motion horizon. These methods cannot deal with the case when only \emph{incomplete trajectory observations} are available, that is, only a portion/segment  of the system's  trajectory within a small time interval of  the  horizon  is available. A method capable of learning objective functions from  incomplete trajectory data  will be beneficial for multiple reasons:  first, a full observation  of the system's trajectory over a complete   time horizon may not be accessible due to limited sensing capabilities, sensor failures, occlusions, etc; second, the computational cost of existing IOC techniques based on full trajectory data may be large; and third,  learning objective functions from incomplete trajectory observations may help to address some  other challenging problems such as  learning time-varying objective functions \citep{jin2019inverse}, online motion prediction \citep{perez2015fast}, and learning from human corrections \citep{bajcsy2018learning}. Under these motivations, this article aims to develop a technique to enable  learning an objective function   only from  incomplete trajectory observations.

\subsection{Related Work}
Existing IOC methods can be  categorized based on whether the forward optimal control problem needs to be computed within the learning process. The first category of existing  works is based on a nested architecture, where the feature weights are updated in an outer loop while the corresponding optimal control system is solved in an inner loop.  Different methods of this type focus on  different strategies to update the feature weights in the outer layer. Representative methods include \citep{abbeel2004apprenticeship}, {where the feature weights are updated towards  matching the feature values of the reproduced optimal trajectories with the demonstrations,} \citep{ratliff2006maximum}, where the feature weights are solved by maximizing the margin between the objective function value of the observed trajectories and the value of any simulated optimal trajectories, and \citep{ziebart2008maximum}, where the feature  weights are optimized such that the probability distribution of  system's trajectories maximizes the entropy while matching the empirical feature values of   demonstrations. These nested IOC  methods have been successfully applied to   humanoid locomotion \citep{park2013inverse}, autonomous vehicles \citep{kuderer2015learning},  robot navigation \citep{vasquez2014inverse}, learning from human corrections \citep{bajcsy2017learning,jin2020learning}, etc. In \citep{mombaur2010human}, the weights are learned by minimizing the deviation of the reproduced optimal trajectory from the observed one; the similar methods are applied to studying human walking \citep{clever2016inverse} and arm motion  \citep{berret2011evidence}.

{A drawback of} the nested IOC methods is the need to solve optimal control problems repeatedly in the inner loop, thus those methods usually suffer from huge computational cost. This motivates the second line of IOC methods, which seek to directly solve for the unknown feature weights. A key idea used in these methods is to establish  optimality conditions  which the observed optimal data  must satisfy.  {For example,} in \citep{keshavarz2011imputing},  the Karush-Kuhn-Tucker (KKT) \citep{boyd2004convex} optimality conditions are established, based on which the feature weights  are then solved by minimizing a loss that  quantifies the violation of such conditions by the observed data.  In \citep{puydupin2012convex}, the authors apply such the KKT-based method to solve IOC problems and study the objective function underlying human locomotion. In \citep{johnson2013inverse}, the Pontryagin's Minimum Principle \citep{pontryagin1962mathematical} is utilized to formulate a residual optimization over the unknown weights. These  methods have been successfully applied to the locomotion analysis \citep{aghasadeghi2014inverse,puydupin2012convex}, walking path generation \citep{papadopoulos2016generation},  human motion segmentation \citep{lin2016human,jin2019inverse}, etc. In   \citep{englert2017inverse}, the authors propose  an inverse KKT method  to enable a robot to learn manipulation tasks. Recently, along this direction, the recoverability for IOC problems has been investigated.  For example, when an optimal control system remains at an equilibrium point, although its trajectory still satisfies the optimality conditions, it is uninformative for  learning the objective function. This issue is discussed in \citep{molloy2016discrete,molloy2017finite}, where a sufficient condition for recovering  weights  from full trajectory observations is proposed.

Existing IOC techniques discussed above require a full observation of a system  trajectory,  that is,  optimal trajectory data  of the system states and inputs over a complete motion horizon.  To our best knowledge,  the IOC problems based on  incomplete trajectory observations are rarely investigated. By an incomplete trajectory observation, we mean that the observed data is a segment or portion of  the system trajectory within a small time interval of the  horizon. We  consider the objective learning using incomplete trajectory data mainly due to the following  motivations. First,   {in certain practical cases}, for example,  due to limited sensing, sensors' failure, or occlusions, a full observation of  system's trajectory data may not be available. Second,  although direct IOC methods improve the computation efficiency compared to the nested counterparts, the computational cost is still significant especially   when handling complex systems with high-dimensional action/state space and long time horizons. Third,  successfully learning objective functions  only using incomplete trajectory data would potentially benefit for addressing many challenging problems such as identifying time-varying objective functions \citep{jin2019inverse}, learning from human  corrections \citep{bajcsy2018learning}, online long-term motion prediction \citep{mainprice2016goal}, etc.

\subsection{Contributions}
This article develops a methodology to learn an objective function of  an optimal control system using  incomplete trajectory observations. The proposed key concept to achieve this goal is the  \emph{recovery matrix}, which is defined on \emph{any  segment data} of the system trajectory and a given candidate feature set. We show that learning of the objective function is related to the rank and kernel properties of the recovery matrix. Different from existing methods, the recovery matrix also captures the \emph{unseen} future information in addition to the available data by an unknown costate variable, which is   jointly estimated  along with the unknown feature weights. By investigating the properties of the recovery matrix, the following insights to solving IOC problems are enabled:
\begin{itemize}
		\item[1)] The rank of  the recovery matrix indicates whether  an observation of incomplete trajectory  data is sufficient for learning the feature weights;
		\item[2)] 
		Additional observation data can contribute to learning the unknown objective function, or at least not degrade the  learning; and irrelevant features can be identified.
		\item[3)] The IOC can be solved by incrementally  incorporating the observation of each data point along the trajectory.
\end{itemize}
\noindent
Based on the  recovery matrix, an IOC approach based on incomplete trajectory observations  is established, and an incremental IOC algorithm is developed  by automatically finding the minimal required
observation.

\smallskip
 
The structure of this article  is as follows. Section 2 states the problem. Section 3 develops   the recovery matrix and its properties. Section 4 presents the IOC method and algorithm  using incomplete trajectory observations. Section 5 conducts numerical  experiments, and Section 6 draws  conclusions.

\subsection*{Notation}
The column operator $\col \{\boldsymbol{x}_1,...,\boldsymbol{x}_k\}$ stacks its arguments into a column.
$\boldsymbol{x}_{k_1:k_2}$ means a column stack of  $\boldsymbol{x}$ indexed from $k_1$ to $k_2$ ($k_1\leq k_2$), that is,  $\boldsymbol{x}_{k_1:k_2}=\col \{\boldsymbol{x}_{k_1},...\boldsymbol{x}_{k_2}\}$.
$\boldsymbol{A}$ (bold) denotes a block matrix.
Given a vector function $\boldsymbol{f}(\boldsymbol{x})$ and a constant $\boldsymbol{x}^*$, $\frac{\partial \boldsymbol{f}}{\partial \boldsymbol{x}^*}$ denotes the Jacobian matrix with respect to $\boldsymbol{x}$ evaluated at $\boldsymbol{x}^*$.
The zero matrix and vector are written as $\boldsymbol{0}$, and the identity matrix as ${\boldsymbol{I}}$, both with appropriate dimensions. $\boldsymbol{A}^\prime$ is the transpose of matrix $\boldsymbol{A}$. $\sigma_{i}(\boldsymbol{A})$ denotes the $i$th smallest singular value of matrix $\boldsymbol{A}$, e.g., $\sigma_{1}(\boldsymbol{A})$ is the smallest singular value.
$\ker \boldsymbol{A}$ denotes the kernel of matrix $\boldsymbol{A}$.

\section{Problem Formulation}\label{problemformulation}
Consider an optimal control system with the following discrete-time dynamics and initial condition:
 {\begin{equation} \label{dynamics}
	\boldsymbol{x}_{k+1}=\boldsymbol{\boldsymbol{f}}(\boldsymbol{x}_k,\boldsymbol{u}_{k}), \quad \boldsymbol{x}_0\in\mathbb{R}^n, \\
	\end{equation}}%
where the vector function $\boldsymbol{f}: \mathbb{R}^n \times\mathbb{R}^m \mapsto \mathbb{R}^n $ is  differentiable; $\boldsymbol{x}_k \in \mathbb{R}^n$ is the system state; $\boldsymbol{u}_k \in \mathbb{R}^m$ is the control input; and $k=0,1,\cdots$ is the time step. Suppose  a trajectory of  states and inputs over a horizon $T$,
 {\begin{equation}\label{traj}
\boldsymbol{\xi}=\{\boldsymbol{\xi}_k:  k=0,1,...,T\}\, \,\text{ with }\,\, \,\boldsymbol{\xi}_k{=}(\boldsymbol{x}^*_{k}, \,\boldsymbol{u}^*_{k}),
\end{equation}}
(locally) minimizes a cost function 
\begin{equation} \label{costfunction}
J(\boldsymbol{\xi})= \sum\nolimits_{ {k=0}}^{T} {{\boldsymbol{\omega}}^{\prime}\boldsymbol{\phi}}^*(\boldsymbol{x}_k,\boldsymbol{u}_k),
\end{equation}
where ${\boldsymbol{\omega}}^{\prime}\boldsymbol{\phi}^*(\cdot,\cdot)$ is the \emph{running cost}. Here  $\boldsymbol{\phi}:\mathbb{R}^n \times\mathbb{R}^m \mapsto \mathbb{R}^s$ is called a \emph{relevant feature vector} and  defined as a column of a \emph{relevant feature set}
\begin{equation} \label{relevantfeatureset}
\mathcal{F}^*=\{\phi_1^*,\phi_2^*,\cdots,\phi_s^*\},
\end{equation}
that is, $\boldsymbol{\phi}^*=\col{\mathcal{F}}^*$, with $\phi_i^*$ being the $i$th  \emph{feature}  for the running cost, and $\boldsymbol{\omega}\in \mathbb{R}^s$ is called the \emph{weight vector}, with the $i$th entry $\omega_i$  corresponding to  $\phi_i^*$. This type of weighted-feature objective function is commonly used in  objective learning problems  \citep{abbeel2004apprenticeship,molloy2017finite,ziebart2008maximum}, and has been successfully applied in a wide range of real-world applications \citep{englert2017inverse,kuderer2015learning,lin2016human,bajcsy2018learning}.  {The dynamics (\ref{dynamics}) and cost function  (\ref{costfunction}) can represent  different optimal control settings as follows. (I) Finite-horizon free-end optimal control: the finite horizon $T$ is given  but the final state $\boldsymbol{x}_{T+1}$ is free, i.e., no  constraint on $\boldsymbol{x}_{T+1}$; (II) finite-horizon fixed-end optimal control: both the finite horizon $T$  and  the final state $\boldsymbol{x}_{T+1}=\boldsymbol{x}_{\text{goal}}$ are given;   and (III) infinite-horizon optimal control: $T=\infty$. Besides, one can consider the  finite-horizon optimal control, where the   final state $\boldsymbol{x}_{T+1}$ is penalized using a final cost term added to (\ref{costfunction}), and this case can be viewed as an extension similar to (II). In our following expositions, we only focus on the first three settings.}

In inverse optimal control (IOC) problems, one is given a relevant feature set $\mathcal{F}^*$, the goal is to obtain an estimate of the weights $\boldsymbol{\omega}$ corresponding to these features from observations of $\boldsymbol{\xi}$. Note that  $\boldsymbol\omega$ can only be determined up to a non-zero scaling factor \citep{molloy2017finite,keshavarz2011imputing}, because any $c\boldsymbol\omega$ with $c> 0$ will  lead to the same  trajectory  $\boldsymbol{\xi}$. Hence we say an estimate  $\hat{\boldsymbol\omega}$ is a \emph{successful estimate} of $\boldsymbol\omega$ if $\hat{\boldsymbol\omega}=c\boldsymbol\omega$ with $c\neq 0$, and the specific  $c>0$ can be determined by  normalization  \citep{keshavarz2011imputing,englert2017inverse}.

We have noted that existing IOC methods   typically assume that the full trajectory data $\boldsymbol{\xi}$ is available.  Violation of this assumption will lead to a failure of existing approaches,  as we will demonstrate in the numerical evaluation in Section \ref{algo1simulation}. In this article, we aim to address this challenge  by developing a technique to estimate $\boldsymbol{\omega}$ only using incomplete trajectory data. Specifically, given a relevant feature set $\mathcal{F}^*$, 
the goal of this paper  is to achieve a successful estimate  $\hat{\boldsymbol{\omega}}=c\boldsymbol{\omega}$  using an \emph{incomplete trajectory observation}
\begin{equation}\label{incompleteTraj}
\boldsymbol{\xi}_{ {t:t+l}}=\{\boldsymbol{\xi}_k: t\leq k \leq  {t+l}\}\subseteq\boldsymbol{\xi},
\end{equation}
which is a segment of $\boldsymbol{\xi}$ within the time interval  $[t, {t+l}]\subseteq[0,T]$.
Here, $t$ is called the \emph{observation starting time} and $l=1,2,\cdots$ called the \emph{observation length}, with $0\leq t<  {t+l}\leq T$. Moreover, for any  observation starting time $t$, we aim to efficiently find the minimal required observation, that is, $l_{\min}$, to achieve a successful estimate of $\boldsymbol{\omega}$. Note that in the above  problem setting,  we \emph{only know} that the  data  $\boldsymbol{\xi}_{t: {t+l}}$ is a   segment  of a system  trajectory $\boldsymbol{\xi}$; we do not know the value of $t$ (i.e., the observation starting time relative to the start of the trajectory), and   do not require  knowledge of any other information about $\boldsymbol{\xi}$ such as the time horizon $T$  {or which type of  optimal control problem $\boldsymbol{\xi}$ is a solution to.}

\section{The Recovery Matrix}
In this section, we introduce the key concept of the \emph{recovery matrix}
and show its relation to  IOC process.
Some properties of the recovery matrix are  investigated to provide    insights to  IOC process.
Connections between the recovery matrix and existing methods is also discussed. The implementation of the recovery matrix is finally presented.

\subsection{Definition of the Recovery Matrix} \label{definitionrecoverymatrix}
We first present the  definition of the recovery matrix, then show its relation to the IOC problem solution, which is also the motivation of  the recovery matrix.
\begin{definition}\label{def_rm}
	Let  a segment of the trajectory,  $\boldsymbol{\xi}_{t: {t+l}}\subseteq \boldsymbol{\xi}$ in (\ref{incompleteTraj}),
	and a  candidate feature set  $\mathcal{F}=\{\phi_1,\phi_2,\cdots,\phi_r\}$ be given. Let $\boldsymbol\phi=\col \mathcal{F}$. Then the recovery matrix, denoted by $\boldsymbol{H}(t,l)$, is defined as:
\begin{equation}\label{H}
\boldsymbol{H}(t,l)=
\begin{bmatrix}
\boldsymbol{H}_1(t,l) & \boldsymbol{H}_2(t,l)
\end{bmatrix}\in \mathbb{R}^{ml\times (r+n)},
\end{equation}	
 with 
	\begin{align}
	\boldsymbol{H}_1(t,l)&=
	\boldsymbol{F}_u(t,l)\boldsymbol{F}^{-1}_x(t,l)\boldsymbol{\Phi}_x(t,l)+\boldsymbol{\Phi}_u(t,l), \label{H1}\\
	\boldsymbol{H}_2(t,l)&=
	\boldsymbol{F}_u(t,l)\boldsymbol{F}_x^{-1}(t,l)\boldsymbol{V}(t,l). \label{H2}
	\end{align}
	Here, $\boldsymbol{F}_x(t,l)$, $\boldsymbol{F}_u(t,l)$, $\boldsymbol{\Phi}_x(t,l)$, $\boldsymbol{\Phi}_u(t,l)$ and $\boldsymbol{V}(t,l)$ are  defined as
	\begin{align}
	\boldsymbol{F}_x(t,l)&=
	\begin{bmatrix}
	\boldsymbol{I}&\,\,\frac{-\partial \boldsymbol{f}^{\prime}}{\partial \boldsymbol{x}^*_{ {t+1}}} & & \\
	\boldsymbol{0}&\boldsymbol{I} & \ddots&  \\
	&  &\ddots &\,\, \,\,\,\frac{-\partial \boldsymbol{f}^{\prime}}{\partial \boldsymbol{x}^*_{ {t\text{+}l\text{-}1}}}    \\[7pt]
	&  & & \boldsymbol{I} \\
	\end{bmatrix}\in \mathbb{R}^{nl \times nl}\label{HFx},\\
	\boldsymbol{F}_u(t,l)&=
	\begin{bmatrix}
	\frac{\partial \boldsymbol{f}^{\prime}}{\partial \boldsymbol{u}^*_{t}} & & & \\
	&\frac{\partial \boldsymbol{f}^{\prime}}{\partial \boldsymbol{u}^*_{t+1}}& & \\
	& &\ddots  & \\
	& & & \frac{\partial \boldsymbol{f}^{\prime}}{\partial \boldsymbol{u}^*_{t\text{+}l\text{-}1}}
	\end{bmatrix}\in \mathbb{R}^{ml \times nl} \label{HFu},\\
	\boldsymbol{\Phi}_x(t,l)&=
	\begin{bmatrix}
	\frac{\partial \boldsymbol\phi}{\partial \boldsymbol{x}^*_{ {t+1}}} &
	\frac{\partial \boldsymbol\phi}{\partial \boldsymbol{x}^*_{ {t\text{+}2}}} &
	\cdots &
	\frac{\partial \boldsymbol\phi}{\partial \boldsymbol{x}^*_{ {t\text{+}l}}}
	\end{bmatrix}^\prime\in \mathbb{R}^{nl\times r}   \label{HPhix},\\
	\boldsymbol{\Phi}_u(t,l)&=
	\begin{bmatrix}
	\frac{\partial \boldsymbol\phi}{\partial \boldsymbol{u}^*_{t}} &
	\frac{\partial \boldsymbol\phi}{\partial \boldsymbol{u}^*_{t+1}} &
	\cdots &
	\frac{\partial \boldsymbol\phi}{\partial \boldsymbol{u}^*_{t\text{+}l\text{-}1}} 
	\end{bmatrix}^\prime \in \mathbb{R}^{ml\times r} \label{HPhiu},\\
	\boldsymbol{V}(t,l)&=
	\begin{bmatrix}
	\boldsymbol{0}&
	\frac{\partial \boldsymbol{f} }{\partial \boldsymbol{x}^*_{ {t\text{+}l}}}
	\end{bmatrix}^{\prime} \in \mathbb{R}^{nl\times n}, \label{Hv}
	\end{align}
	respectively.
\end{definition}

Before showing a relationship between the above recovery matrix and IOC, we impose the following assumption on the given  candidate feature set $\mathcal{F}$ in Definition \ref{def_rm}.
\begin{assumption}\label{featureassumption}
		 {In  Definition~\ref{def_rm}, the candidate feature set $\mathcal{F}$  contains  as a subset the relevant features $\mathcal{F^*}$ in (\ref{relevantfeatureset}),} i.e., $\mathcal{F^*}\subseteq\mathcal{F}$.
\end{assumption}
\noindent
Assumption \ref{featureassumption} requires that the relevant features $\mathcal{F^*}$ in (\ref{relevantfeatureset}) are contained by the given candidate feature set $\mathcal{F}$, which means that $\mathcal{F}$  also allows for including additional features  that are irrelevant to the optimal control system.  Although restrictive for  choice of features, this assumption is likely to be fulfilled in implementation by providing a larger  set including many features  when the knowledge of exact relevant features is not available. Under Assumption \ref{featureassumption},  without loss of generality, we let
\begin{equation} \label{rm_features}
\mathcal{F}=\{\phi_1^*,\phi_2^*,\cdots,\phi_s^*,\tilde{\phi}_{s+1}, \cdots,\tilde{\phi}_{r}\},
\end{equation}
that is, the first $s$ elements are from $\mathcal{F}^*$ in (\ref{relevantfeatureset}). Then we have
\begin{equation} \label{rm_featurevector}
\boldsymbol\phi(\boldsymbol{x},\boldsymbol{u})=\col {\mathcal{F}}=
\begin{bmatrix}
\boldsymbol\phi^*(\boldsymbol{x},\boldsymbol{u}) \\
\tilde{\boldsymbol{\phi}} (\boldsymbol{x},\boldsymbol{u})
\end{bmatrix} \in \mathbb{R}^r,
\end{equation} where $\boldsymbol\phi^*\in\mathbb{R}^s$ are the relevant feature vector in (\ref{costfunction}) while   $\tilde{\boldsymbol{\phi}}\in\mathbb{R}^{(r-s)}$ corresponds to the features that are not in $\mathcal{F^*}$. We define a weight vector 
{\begin{equation}\label{rm_weightvector}
	\bar{\boldsymbol{\omega}}=\col\{\boldsymbol{\omega},\boldsymbol{0}\} \in \mathbb{R}^r
	\end{equation}}%
corresponding to (\ref{rm_featurevector}), where $\boldsymbol{\omega}$ are the  weights in (\ref{costfunction}) for $\boldsymbol{\phi}^*$. Based on
(\ref{costfunction}), we can say that the system's optimal trajectory $ \boldsymbol{\xi}$ in (\ref{traj}) also
(locally) minimize the cost function of
\begin{equation} \label{costfun_ex}
{J}(\boldsymbol{\xi})=\sum_{ {k=0}}^{T} \bar{\boldsymbol{\omega}}^{\prime}\boldsymbol{\phi}(\boldsymbol x_k,\boldsymbol u_k),
\end{equation}
with the dynamics and initial condition in (\ref{dynamics}).
 {
Next, we will distinguish the three optimal control settings, as described in the problem formulation, and then establish the relationship between the   recovery matrix and the IOC problem solution.
}

 {\subsubsection*{Case I: Finite-Horizon Free-End Optimal Control.} We first consider the optimal control setting with finite horizon $T$ and free final state $\boldsymbol{x}_{T+1}$. In this case, given the cost function (\ref{costfun_ex}) and the dynamics constraint (\ref{dynamics}), one can define the following Lagrangian:
	\begin{equation} \label{lagrange_ex}
	{L}={J}(\boldsymbol{\xi}){+}\sum_{k=0}^{T}\boldsymbol \lambda_{k{+}1}^{\prime}\big(\boldsymbol{f}(\boldsymbol x_{k},\boldsymbol u_k)-\boldsymbol x_{k{+}1}\big),
	\end{equation}
}\noindent
where  $\boldsymbol \lambda_{k+1}\in {\mathbb{R}^n}$, $k=0,1,\dots,  {T}$, is   Lagrange multipliers. According to the Karush-Kuhn-Tucker (KKT)  optimality conditions \citep{boyd2004convex}, there exist multipliers $\boldsymbol\lambda^*_{1:T+1}=\col\{\boldsymbol \lambda_1^*, \boldsymbol \lambda_{2}^*,...,\boldsymbol \lambda_{T}^*,  \boldsymbol{\lambda}_{T+1}^* \}$, also referred to as costates, such that the optimal trajectory $\boldsymbol{\xi}$   must satisfy the following  conditions
\begin{subequations}\label{KKT_condition}
	\begin{align}
	\frac{\partial {L}}{\partial \boldsymbol{x}^*_{ {1:T+1}}}&=\boldsymbol{0}, \label{KKT_condition.1} \\
	\frac{\partial {L}}{\partial \boldsymbol{u}^*_{ {0:T}}}&=\boldsymbol{0}. \label{KKT_condition.2}
	\end{align}
\end{subequations} 
Based on the definitions in (\ref{HFx})-(\ref{Hv}),  the  equations in (\ref{KKT_condition.1}) and (\ref{KKT_condition.2}) can be written as
{\begin{subequations}\label{kktmat}
		\begin{align}
		-\boldsymbol{F}_x( {0,T})\boldsymbol{\lambda}_{ {1:T}}^*{+}\boldsymbol{\Phi}_x( {0,T})\bar{\boldsymbol \omega}&=\boldsymbol{0}=-\boldsymbol{V}( {0,T})\boldsymbol{\lambda}^*_{ {T+1}},
		\label{kktmat:1}\\
		\boldsymbol{F}_u( {0,T})\boldsymbol{\lambda}_{ {1:T}}^*{+}\boldsymbol{\Phi}_u( {0,T})\bar{\boldsymbol \omega}&=\boldsymbol{0},
		\label{kktmat:2}
		\end{align}
\end{subequations}}%
respectively,   {where in (\ref{kktmat:1}),  $\boldsymbol{\lambda}^*_{T+1}=\boldsymbol{0}$  directly results from extending  (\ref{KKT_condition.1}) at the final state $\boldsymbol{x}_{T+1}$.} The optimality equations in  (\ref{kktmat}) are established for  full optimal trajectory  $ \boldsymbol{\xi}$. Given any segment  of the trajectory, say $\boldsymbol{\xi}_{ {t:t+l}}\subseteq \boldsymbol{\xi}$ in (\ref{incompleteTraj}), the following equations can be obtained  by  partitioning (\ref{kktmat:1}) and (\ref{kktmat:2}) in rows, 
{\begin{subequations}\label{kktmatin}
		\begin{align}
		-\boldsymbol{F}_x(t,l)\boldsymbol{\lambda}_{ {t+1:t+l}}^*+\boldsymbol{\Phi}_x(t,l)\bar{\boldsymbol \omega}&=-\boldsymbol{V}(t,l)\boldsymbol{\lambda}_{ {t+l+1}}^*,
		\label{kktmatin:1}\\
		\boldsymbol{F}_u(t,l)\boldsymbol{\lambda}_{ {t+1:t+l}}^*+\boldsymbol{\Phi}_u(t,l)\bar{\boldsymbol \omega}&=\boldsymbol{0},
		\label{kktmatin:2}
		\end{align}
\end{subequations}}%
respectively.  {For the above (\ref{kktmatin}), we note that when  $\boldsymbol{\xi}_{t:t+l}=\boldsymbol{\xi}_{0:T}$, i.e., when the observation  is the  full trajectory  data $\boldsymbol{\xi}$,  (\ref{kktmatin}) will become (\ref{kktmat}).  Thus, a full trajectory observation can be viewed as a special case of an incomplete trajectory observation, and we will further  discuss this in Section \ref{relationshiptoexistingwork}.}

 {\subsubsection*{Case II: Finite-Horizon Fixed-End Optimal Control.} We next consider the optimal control setting  with a finite  horizon $T$ and a given fixed final state $\boldsymbol{x}_{T+1}=\boldsymbol{x}_{\text{goal}}$. Given the cost function (\ref{costfun_ex}), the dynamics (\ref{dynamics}), and the final state constraint $\boldsymbol{x}_{T+1}=\boldsymbol{x}_{\text{goal}}$,  one can define the following Lagrangian:
		\begin{equation} \label{lagrange_ex2}
	{L}{=}{J}(\boldsymbol{\xi}){+}\sum_{k=0}^{T}\boldsymbol \lambda_{k\text{+}1}^{\prime}\big(\boldsymbol{f}(\boldsymbol x_{k},\boldsymbol u_k){-}\boldsymbol x_{k\text{+}1}\big){+}\boldsymbol \lambda_{\text{goal}}^{\prime}(\boldsymbol{x}_{T\text{+}1}{-}\boldsymbol{x}_{\text{goal}}),
	\end{equation}
	where the  difference from (\ref{lagrange_ex}) is that the term  $\boldsymbol \lambda_{\text{goal}}^{\prime}(\boldsymbol{x}_{T+1}-\boldsymbol{x}_{\text{goal}})$ is added since the final state is subject to the given $\boldsymbol{x}_{\text{goal}}$  constraint, and $\boldsymbol \lambda_{\text{goal }}\in\mathbb{R}^n$ is the associated Lagrangian multiplier. Following a similar derivation as in Case I, one obtains the same equations in (\ref{kktmatin}) for  any segment data of the trajectory $\boldsymbol{\xi}_{ {t:t+l}}\subseteq \boldsymbol{\xi}$. Here, the only difference from Case I is that when $\boldsymbol{\xi}_{t:t+l}=\boldsymbol{\xi}_{0:T}$, one usually has $\boldsymbol{\lambda}^*_{T+1}=\boldsymbol{\lambda}^*_{\text{goal}}\neq\boldsymbol{0}$ in this case due to the fixed final state constraint, while $\boldsymbol{\lambda}^*_{T+1}=\boldsymbol{0}$ in Case I. In addition, for the finite-horizon optimal control, in which the final state $\boldsymbol{x}_{T+1}$ is penalized using a final cost term added to (\ref{costfunction}),  we can derive the similar result  of $\boldsymbol{\lambda}^*_{T+1}\neq\boldsymbol{0}$.
}

 {\subsubsection*{Case III: Infinite-Horizon Optimal Control.}  For the infinite-horizon optimal control setting, the  optimal trajectory  $\boldsymbol{\xi}$ is more conveniently characterized by the  Bellman optimality condition \citep{bertsekas1995dynamic}:
\begin{equation}\label{bellmanequ}
V(\boldsymbol{x}^*_k)=\bar{\boldsymbol{\omega}}^{\prime}\boldsymbol{\phi}(\boldsymbol x_k^*,\boldsymbol u_k^*)+V(\boldsymbol{f}(\boldsymbol{x}_k^*,\boldsymbol{u}_k^*)),
\end{equation}
where $V(\boldsymbol{x}^*_k)$ is the (unknown) optimal cost-to-go function evaluated at   state $\boldsymbol{x}_k^*$. 
Next, we differentiate the Bellman optimality equation in (\ref{bellmanequ}) on both sides with respect to $\boldsymbol{x}_k^*$ while denoting $\boldsymbol{\lambda}_{k}^*=\frac{\partial V(\boldsymbol{x}_{k})}{\partial\boldsymbol{x}_{k}^*}\in\mathbb{R}^{n}$, and then  obtain
\begin{equation}\label{bellman.x}
\boldsymbol{\lambda}_{k}^*=\small{\frac{\partial\boldsymbol{\bar{\phi}}^\prime}{\partial \boldsymbol{x}_k^*}}\boldsymbol{\bar{\omega}}+\small{\frac{\partial\boldsymbol{{f}}^\prime}{\partial \boldsymbol{x}_k^*}}\boldsymbol{\lambda}^*_{k+1}.
\end{equation}
Differentiating the Bellman optimality equation  (\ref{bellmanequ})  on both sides with respect to $\boldsymbol{u}_k^*$ yields
\begin{equation}\label{bellman.u}
\boldsymbol{0}=\small{\frac{\partial\boldsymbol{\bar{\phi}}^\prime}{\partial \boldsymbol{u}_k^*}}\boldsymbol{\bar{\omega}}+\small{\frac{\partial\boldsymbol{{f}}^\prime}{\partial \boldsymbol{u}_k^*}}\boldsymbol{\lambda}^*_{k+1}.
\end{equation}
For any available trajectory segment $\boldsymbol{\xi}_{ {t:t+l}}\subseteq \boldsymbol{\xi}$, we stack  equation (\ref{bellman.x})  for  all $\boldsymbol{x}^*_{t+1:t+l}$ and stack equation (\ref{bellman.u}) for all $\boldsymbol{u}^*_{t:t+l-1}$, and  obtain the same equations in (\ref{kktmatin}). 
}

\bigskip
\noindent
 {From the above analysis, we conclude  that, for any   trajectory segment  $\boldsymbol{\xi}_{ {t:t+l}}\subseteq \boldsymbol{\xi}$, regardless of the corresponding optimal control problem, we can always use  the segment data  $\boldsymbol{\xi}_{ {t:t+l}}$ to establish the equations (\ref{kktmatin}). Thus, in what follows, we do not distinguish the specific optimal control settings, and only focus on  equations (\ref{kktmatin}) to show the relationship between the recovery matrix in Definition \ref{def_rm} and  IOC problem solution.} 

\medskip

By noticing that $\boldsymbol{F}_x(t,l)$ in (\ref{kktmatin:1}) is always invertible, we combine (\ref{kktmatin:1}) with (\ref{kktmatin:2}) and eliminate  $\boldsymbol{\lambda}_{ {t+1:t+l}}^*$, which then yields
{\begin{multline} \label{recoveryequation}
	\Big(\boldsymbol{F}_u(t,l)\boldsymbol{F}_x^{-1}(t,l)\boldsymbol{\Phi}_x(t,l)+\boldsymbol{\Phi}_u(t,l)\Big) \bar{\boldsymbol \omega}\\ 
	+\Big(\boldsymbol{F}_u(t,l)\boldsymbol{F}_x^{-1}(t,l)\boldsymbol{V}(t,l)\Big) \boldsymbol \lambda_{ {t+l+1}}^*=\boldsymbol{0}.
	\end{multline}}%
Considering the definition of the recovery matrix in (\ref{H})-(\ref{H2}), (\ref{recoveryequation}) can be written as
\begin{equation}\label{recoveryequationbyH}
\begin{aligned}
&\boldsymbol{H}_1(t,l)\bar{\boldsymbol\omega}+\boldsymbol{H}_2(t,l) \boldsymbol \lambda_{ {t{+}l{+}1}}^*\\
= \,\,&\boldsymbol{H}(t,l)
\begin{bmatrix}
\bar{\boldsymbol{\omega}}\\
\boldsymbol \lambda_{ {t+l+1}}^*
\end{bmatrix}=\boldsymbol{0}.
\end{aligned}
\end{equation}
Equation (\ref{recoveryequationbyH}) reveals that the weights $\bar{\boldsymbol \omega}$ and costate $ \boldsymbol \lambda_{ {t+l+1}}^*$ must  satisfy a linear equation, where the coefficient matrix is exactly the  recovery matrix that is defined on
the  trajectory segment $\boldsymbol{\xi}_{t:t+l}\subseteq\boldsymbol{\xi}$, and candidate  feature set $\mathcal{F}$. Here,   the costate $\boldsymbol{\lambda}_{ {t+l+1}}^*$  can be interpreted as a variable encoding  the \emph{unseen future information}   beyond the  observational interval $[t,t+l]$.  {In fact, from the discussions for Case III, we note that   costate $\boldsymbol{\lambda}_{t+l+1}^*$ is the  gradient of the optimal cost-to-go function with respect to the state evaluated at $\boldsymbol{{x}}^*_{t+l+1}$.  
}

{In IOC problems, in order to obtain an estimate of the unknown   weights $\bar{\boldsymbol{\omega}}$ only using the available segment data $\boldsymbol{\xi}_{t:t+l}$, one also needs to account for the unknown  $\boldsymbol{\lambda}^*_{ {t+l+1}}$, as in (\ref{recoveryequationbyH}).
The following  theorem establishes a relationship between  a trajectory segment $\boldsymbol{\xi}_{t:t+l}\subseteq{\boldsymbol{\xi}}$ and  a successful estimate of the weights $\bar{\boldsymbol{\omega}}$ for given candidate features $\mathcal{F}$. }

\begin{theorem}\label{theorem_rm}
	Given a trajectory segment $\boldsymbol{\xi}_{t: {t+l}}\subseteq{\boldsymbol{\xi}}$, let the recovery matrix $\boldsymbol{H}(t,l)$ be defined as in Definition {\ref{def_rm}}  with the candidate feature set $\mathcal{F}$ satisfying Assumption \ref{featureassumption}. Let a vector   $\col\{\hat{\boldsymbol{\omega}},\hat{\boldsymbol{\lambda}}\}\neq \boldsymbol 0$ satisfy $\col\{\hat{\boldsymbol{\omega}},\hat{\boldsymbol{\lambda}}\}\in\ker \boldsymbol{H}(t,l)$ with $\hat{\boldsymbol{\omega}}\in \mathbb{R}^r$. If
	\begin{equation} \label{theorem_rm_condition}
	 \rank \boldsymbol{{H}}(t,l)= r+n-1,
	\end{equation}
	then there exists a constant  $c\neq0$ such that the $i$th entry of $\hat{\boldsymbol{\omega}}$ satisfies
	\begin{equation} \label{theorem_H_result}
	\hat\omega_i =
	\begin{cases}
	\, c\omega_i, & \text{if} \quad \phi_i\in \mathcal{F}^* \\
	\, 0, & \text{otherwise}
	\end{cases},
	\end{equation}
	and  vector $\col\{\hat{\omega}_i:\phi_i\in \mathcal{F}^*,  {i=1,2,\cdots,r} \}=c\boldsymbol\omega$ thus is a successful estimate of $\boldsymbol \omega$ in (\ref{costfunction}).
\end{theorem}
\begin{proof}
	Based on  equations (\ref{kktmatin}), we note that 
	for a trajectory segment $\boldsymbol{\xi}_{t:t+l}\subseteq\boldsymbol{\xi}$, there always exists  $\boldsymbol{\lambda}^*_{ {t+l+1}}\in\mathbb{R}^n$   such that  $\col\{\bar{\boldsymbol{\omega}},{\boldsymbol{\lambda}^*_{ {t+l+1}}}\}$  satisfies
	 (\ref{recoveryequationbyH}), i.e., $\col\{\bar{\boldsymbol{\omega}},{\boldsymbol{\lambda}^*_{ {t+l+1}}}\}\in\ker \boldsymbol{H}(t,l)$. Due to  (\ref{theorem_rm_condition}) which means that the kernel of $\boldsymbol{H}(t,l)$  is one-dimensional,  any nonzero  vector $\col\{\hat{\boldsymbol{\omega}},\hat{\boldsymbol{\lambda}}\}\in\ker \boldsymbol{H}(t,l)$  will have 
	 $\hat{\boldsymbol{\omega}}=c\bar{\boldsymbol\omega}$ ($c\neq 0$).
	 Thus, one can conclude that  $\hat{\boldsymbol{\omega}}$ is a scaled version of $\bar{{\boldsymbol{\omega}}}$, and that the entries in $\hat{\boldsymbol{\omega}}$ corresponding to the relevant features in $\mathcal{F}^*$ will stack  a successful estimate of $\boldsymbol \omega$  (\ref{costfunction}). This completes the proof. $\qed$
\end{proof}
\begin{remark}
	Theorem \ref{theorem_rm} states that the recovery matrix bridges  trajectory segment data to the unknown objective function.    
	First, the rank of the recovery matrix $\boldsymbol{H}(t,l)$ indicates whether one is able to use the  trajectory segment    $\boldsymbol{\xi}_{t:t+l}\subseteq\boldsymbol{\xi}$ to obtain a successful  estimate of  weights $\bar{\boldsymbol{\omega}}$ for the given candidate features $\mathcal{F}$.
	In particular, if the rank condition  (\ref{theorem_rm_condition}) for the recovery matrix  $\boldsymbol{H}(t,l)$ is satisfied, then any nonzero vector $\col\{\hat{\boldsymbol{\omega}},\hat{\boldsymbol{\lambda}}\}$ in the kernel of  $\boldsymbol{H}(t,l)$  has that: the vector of the first $r$ entries in $\col\{\hat{\boldsymbol{\omega}},\hat{\boldsymbol{\lambda}}\}$, i.e.,  $\hat{\boldsymbol{\omega}}$,  satisfies $\hat{\boldsymbol{\omega}}=c\bar{\boldsymbol{\omega}}$. Second, including additional irrelevant features in $\mathcal{F}$ will not influence the weight estimate  for the relevant features, since  the weight estimates  in $\hat{\boldsymbol{\omega}}$ for these irrelevant features will be zeros. We will  demonstrate this  in   numerical experiments in Section \ref{algo1simulation}.  {We will also demonstrate the use of the recovery matrix to solve different optimal control problems later in   numerical experiments.}
\end{remark}

\subsection{Properties of the Recovery Matrix}\label{propertySection}
Since the recovery matrix connects   trajectory segment data to the unknown cost function, we next investigate  the properties of the recovery matrix, which will provide us a better understanding of how the   data and the selected features are incorporated in  IOC process. We first present an iterative formula for the
recovery matrix.

\begin{lemma}[Iterative Property] \label{lemma1}
	For a trajectory  segment  $\boldsymbol{\xi}_{t:t+l}\subset{\boldsymbol{\xi}}$ and  the  subsequent  data point $\boldsymbol{\xi}_{t+l+1}=\{\boldsymbol{x}_{t+l+1}^*,\boldsymbol{u}_{t+l+1}^*\}$, one has
	\begin{align}
	\boldsymbol{H}(t,l+1)&=
	\begin{bmatrix}
	\boldsymbol{H}_1(t,l+1) & \boldsymbol{H}_2(t,l+1)
	\end{bmatrix}  \label{iterH1}\\
	&=\begin{bmatrix}
	\boldsymbol{H}_{1}(t,l) &\boldsymbol{H}_{2}(t,l) \\
	\frac{\partial \boldsymbol \phi^{\prime}}{\partial \boldsymbol{u}^*_{t+l}}&
	\frac{\partial \boldsymbol{f}^{\prime}}{\partial \boldsymbol{u}^*_{t+l}}
	\end{bmatrix}
	\begin{bmatrix}
	\boldsymbol{I} & \boldsymbol 0 \\
	\frac{\partial \boldsymbol \phi^{\prime}}{\partial \boldsymbol{x}^*_{ {t+l+1}}}&
	\frac{\partial \boldsymbol{f}^{\prime}}{\partial \boldsymbol{x}^*_{ {t+l+1}}}
	\end{bmatrix}, \nonumber 
	\end{align}
	with  $\boldsymbol{H}(t,1)$ corresponding to  $\boldsymbol{\xi}_{t:t+1}=(\boldsymbol{x}^*_{ {t:t+1}},\boldsymbol{u}^*_{ {t:t+1}})$:
	\begin{align} \label{iterH0}
	\boldsymbol H(t,1)&=
	\begin{bmatrix}
	\boldsymbol{H}_1(t,1) & \boldsymbol{H}_2(t,1)
	\end{bmatrix} \nonumber \\
	&=
	\begin{bmatrix}
	(\frac{\partial \boldsymbol{f}^{\prime}}{\partial \boldsymbol{u}^*_{t}}\frac{\partial \boldsymbol\phi^{\prime}}{\partial \boldsymbol{x}^*_{ {t+1}}}+
	\frac{\partial \boldsymbol\phi^{\prime}}{\partial \boldsymbol{u}^*_{t}}) & \frac{\partial \boldsymbol{f}^{\prime}}{\partial \boldsymbol{u}^*_{t}}\frac{\partial \boldsymbol{f}^{\prime}}{\partial \boldsymbol{x}^*_{ {t+1}}}\\
	\end{bmatrix}.
	\end{align}
\end{lemma}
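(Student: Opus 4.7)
The plan is to exploit the block-bidiagonal structure of $\boldsymbol{F}_x$ and the block-diagonal structure of $\boldsymbol{F}_u$ so that extending the observation window from length $l$ to $l+1$ appends exactly one block row (and, for $\boldsymbol{F}_x$, one block column) at the bottom-right of each ingredient matrix. Reading off (\ref{HFx})--(\ref{Hv}), I would first write
\begin{equation*}
\boldsymbol{F}_x(t,l+1)=\begin{bmatrix}\boldsymbol{F}_x(t,l) & -\boldsymbol{V}(t,l)\\ \boldsymbol{0} & I\end{bmatrix},\quad \boldsymbol{F}_u(t,l+1)=\begin{bmatrix}\boldsymbol{F}_u(t,l) & \boldsymbol{0}\\ \boldsymbol{0} & \frac{\partial \boldsymbol{f}^{\prime}}{\partial \boldsymbol{u}^*_{t+l}}\end{bmatrix},
\end{equation*}
together with $\boldsymbol{\Phi}_x(t,l+1)=\col\{\boldsymbol{\Phi}_x(t,l),\frac{\partial \boldsymbol{\phi}^{\prime}}{\partial \boldsymbol{x}^*_{t+l}}\}$, $\boldsymbol{\Phi}_u(t,l+1)=\col\{\boldsymbol{\Phi}_u(t,l),\frac{\partial \boldsymbol{\phi}^{\prime}}{\partial \boldsymbol{u}^*_{t+l}}\}$, and $\boldsymbol{V}(t,l+1)=\col\{\boldsymbol{0},\frac{\partial \boldsymbol{f}^{\prime}}{\partial \boldsymbol{x}^*_{t+l}}\}$. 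The crucial observation that drives the whole proof is that the new super-diagonal block of $\boldsymbol{F}_x(t,l+1)$ equals exactly $-\boldsymbol{V}(t,l)$; this is precisely what reintroduces $\boldsymbol{H}_2(t,l)$ on the right-hand side of the iteration.

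Next I would invert the block upper-triangular $\boldsymbol{F}_x(t,l+1)$ by the standard block inverse formula to obtain
\begin{equation*}
\boldsymbol{F}_x^{-1}(t,l+1)=\begin{bmatrix}\boldsymbol{F}_x^{-1}(t,l) & \boldsymbol{F}_x^{-1}(t,l)\boldsymbol{V}(t,l)\\ \boldsymbol{0} & I\end{bmatrix}.
\end{equation*}
Substituting this together with the expansions above into (\ref{H1}) and (\ref{H2}), the outer products $\boldsymbol{F}_u(t,l)\boldsymbol{F}_x^{-1}(t,l)\boldsymbol{\Phi}_x(t,l)$ and $\boldsymbol{F}_u(t,l)\boldsymbol{F}_x^{-1}(t,l)\boldsymbol{V}(t,l)$ collapse to $\boldsymbol{H}_1(t,l)-\boldsymbol{\Phi}_u(t,l)$ and $\boldsymbol{H}_2(t,l)$ respectively. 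Re-grouping the remaining cross terms involving $\frac{\partial\boldsymbol{\phi}^{\prime}}{\partial\boldsymbol{x}^*_{t+l}}$, $\frac{\partial\boldsymbol{\phi}^{\prime}}{\partial\boldsymbol{u}^*_{t+l}}$, $\frac{\partial\boldsymbol{f}^{\prime}}{\partial\boldsymbol{x}^*_{t+l}}$ and $\frac{\partial\boldsymbol{f}^{\prime}}{\partial\boldsymbol{u}^*_{t+l}}$ then produces precisely the two $2\times 2$ block factors on the right-hand side of (\ref{iterH1}).

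The initial case $l=1$ requires no induction: substituting $\boldsymbol{F}_x(t,1)=I$, $\boldsymbol{F}_u(t,1)=\frac{\partial \boldsymbol{f}^{\prime}}{\partial \boldsymbol{u}^*_{t}}$, $\boldsymbol{\Phi}_x(t,1)=\frac{\partial \boldsymbol{\phi}^{\prime}}{\partial \boldsymbol{x}^*_{t}}$, $\boldsymbol{\Phi}_u(t,1)=\frac{\partial \boldsymbol{\phi}^{\prime}}{\partial \boldsymbol{u}^*_{t}}$ and $\boldsymbol{V}(t,1)=\frac{\partial \boldsymbol{f}^{\prime}}{\partial \boldsymbol{x}^*_{t}}$ directly into (\ref{H1})--(\ref{H2}) reproduces (\ref{iterH0}) by inspection. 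I expect the main obstacle to be purely bookkeeping: keeping the block row/column sizes consistent when splitting $\boldsymbol{F}_x(t,l+1)$, recognizing that the single non-zero slot of $\boldsymbol{V}(t,l)$ aligns with the new super-diagonal column of $\boldsymbol{F}_x(t,l+1)$, and assembling the resulting cross terms in the correct left-right order so that the factorization in (\ref{iterH1}) is reached rather than some algebraically equivalent but unfactored expression. No further analytic ingredient is needed, since $\boldsymbol{F}_x$ is unit upper-triangular and hence trivially invertible for every $l$.
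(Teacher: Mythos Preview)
Your proposal is correct and follows essentially the same route as the paper's own proof: the paper likewise writes the block expansions of $\boldsymbol{F}_x(t,l+1)$, $\boldsymbol{F}_u(t,l+1)$, $\boldsymbol{\Phi}_x(t,l+1)$, $\boldsymbol{\Phi}_u(t,l+1)$, inverts the block upper-triangular $\boldsymbol{F}_x(t,l+1)$ via the Schur-complement formula to obtain exactly your expression for $\boldsymbol{F}_x^{-1}(t,l+1)$, substitutes into (\ref{H1})--(\ref{H2}), and then regroups to reach (\ref{iterH1}); the base case $l=1$ is handled identically by direct substitution.
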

\begin{proof}
	Please see Appendix \ref{prooflemma1}.$\qed$
\end{proof}
\noindent
The   iterative property shows that the recovery matrix can be  calculated  by incrementally  integrating each subsequent data point $\boldsymbol{\xi}_{ {t+1+1}}$ into the current recovery matrix $\boldsymbol{H}(t,l)$. {Due to this property, the computation of  matrix inversions in  the recovery matrix in Definition \ref{def_rm} can be avoided.} This property will be used to devise efficient  IOC algorithms   in Section \ref{sectionalgorithm}.

The recovery matrix is defined on two elements: one is the  segment data $\boldsymbol{\xi}_{t:t+l}$ and the other are the selected candidate features $\mathcal{F}$. In what follows, we will show how these two components affect the recovery matrix and further the  IOC process. For data observations, we expect that including more data points into $\boldsymbol{\xi}_{t:t+l}$ may contribute to enabling the successful estimation of the unknown weights. This is implied by the following lemma.

\begin{lemma} [Rank Nondecreasing Property] 	\label{lemma2}
	For a trajectory  segment  $\boldsymbol{\xi}_{ {t:t+l}}\subset{\boldsymbol{\xi}}$ and any  $\mathcal{F}$, one has
	\begin{equation}
	\rank \boldsymbol{{H}}(t,l)\leq \rank\boldsymbol{{H}}(t,l+1),
	\end{equation}
	if the new trajectory  point $\boldsymbol{\xi}_{ {t+l+1}}=(\boldsymbol{x}^*_{ {t+l+1}},\boldsymbol{u}^*_{ {t+l+1}})$ has $\det (\frac{\partial \boldsymbol{f}}{\partial \boldsymbol{x}^*_{ {t+l+1}}})\neq 0$.
\end{lemma}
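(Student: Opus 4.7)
The plan is to reduce the claim to the iterative formula of Lemma~\ref{lemma1} and then exploit two elementary facts: right-multiplication by an invertible matrix preserves rank, and appending rows to a matrix cannot decrease its rank.

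First I would invoke Lemma~\ref{lemma1} to write $\boldsymbol{H}(t,l+1)$ as the product of the two block matrices displayed in (\ref{iterH1}). Call them $\boldsymbol{A}$ (the left $m(l+1)\times(r+n)$ factor) and $\boldsymbol{B}$ (the right $(r+n)\times(r+n)$ factor). The matrix $\boldsymbol{B}$ is block lower-triangular with diagonal blocks $I$ and $\partial\boldsymbol{f}'/\partial\boldsymbol{x}^*_{t+l}$; by the hypothesis $\det(\partial\boldsymbol{f}/\partial\boldsymbol{x}^*_{t+l})\neq 0$, the latter block is invertible and hence so is $\boldsymbol{B}$. Consequently
\[
\rank \boldsymbol{H}(t,l+1) \;=\; \rank(\boldsymbol{A}\boldsymbol{B}) \;=\; \rank \boldsymbol{A}.
\]

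Next I would observe that $\boldsymbol{A}$ is obtained from $\boldsymbol{H}(t,l)=[\boldsymbol{H}_1(t,l)\;\;\boldsymbol{H}_2(t,l)]$ by appending the $m$ new rows $[\,\partial\boldsymbol{\phi}'/\partial\boldsymbol{u}^*_{t+l}\;\;\partial\boldsymbol{f}'/\partial\boldsymbol{u}^*_{t+l}\,]$ at the bottom. Appending rows to a matrix never decreases its rank, so $\rank\boldsymbol{A}\geq\rank\boldsymbol{H}(t,l)$. Chaining the two relations yields
\[
\rank \boldsymbol{H}(t,l)\;\leq\;\rank \boldsymbol{A}\;=\;\rank \boldsymbol{H}(t,l+1),
\]
which is the desired inequality.

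There is no real obstacle here beyond correctly citing Lemma~\ref{lemma1}; the only point that deserves care is verifying that the hypothesis on $\det(\partial\boldsymbol{f}/\partial\boldsymbol{x}^*_{t+l})$ is exactly what is needed to make $\boldsymbol{B}$ invertible (and not any stronger rank assumption on $\boldsymbol{H}(t,l)$ itself). I would also make sure the dimensions match, namely $\boldsymbol{H}(t,l)\in\mathbb{R}^{ml\times(r+n)}$, $\boldsymbol{A}\in\mathbb{R}^{m(l+1)\times(r+n)}$ and $\boldsymbol{B}\in\mathbb{R}^{(r+n)\times(r+n)}$, so that the factorization from Lemma~\ref{lemma1} is well-posed and the two rank identities above are legitimate.
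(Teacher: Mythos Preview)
Your proposal is correct and follows essentially the same argument as the paper: invoke the factorization of Lemma~\ref{lemma1}, use the hypothesis $\det(\partial\boldsymbol{f}/\partial\boldsymbol{x}^*_{t+l})\neq 0$ to conclude that the right factor is invertible so that $\rank\boldsymbol{H}(t,l+1)$ equals the rank of the left factor, and then observe that the left factor contains $\boldsymbol{H}(t,l)$ as its top block of rows. Your write-up is in fact slightly more explicit than the paper's about why the right factor is invertible (block lower-triangular with invertible diagonal blocks) and about the dimension bookkeeping.
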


\begin{proof}
	Please see Appendix \ref{prooflemma2}. $\qed$
\end{proof}
\noindent
We have noted in Theorem \ref{theorem_rm} that the rank of the recovery matrix is related to whether one is able to use segment data $\boldsymbol{\xi}_{ {t:t+l}}$ to achieve a successful estimate of the  weights. Thus the rank of the recovery matrix can be viewed as an indicator of the capability of the available segment data $\boldsymbol{\xi}_{ {t:t+l}}$ to reflect the unknown weights. Lemma~\ref{lemma2} postulates that additional  data, if its  Jacobian matrix of the dynamics is non-singular,  tends to contribute to solving the IOC problem by increasing the rank of the recovery matrix towards satisfying  (\ref{theorem_rm_condition}), or at least will not make a degrading contribution. Further in Section~\ref{rankdiscussions}, we will  analytically and experimentally  demonstrate in which cases the additional observation data can increase the rank of the recovery matrix, and in which cases the additional  data points cannot increase (i.e. maintain the recovery matrix rank).

The next lemma provides a necessary condition for the rank of the recovery matrix if the candidate feature set $\mathcal{F}$ contains as a subset the relevant features $\mathcal{F^*}$, i.e., $\mathcal{F^*}\subseteq\mathcal{F}$.

\begin{lemma} [Rank Upper Bound Property] 	\label{lemma3}
	If Assumption~\ref{featureassumption} holds,  then for  any trajectory segment  $\boldsymbol{\xi}_{t:t+l}\subseteq{\boldsymbol{\xi}}$,
	\begin{equation}\label{rankbound}
	\rank \boldsymbol{{H}}(t,l)\leq r+n-1
	\end{equation}
	always holds. If
	 there exists another   relevant feature subset   ${\mathcal{\widetilde F}}\subseteq \mathcal{F}$ with corresponding weights $\boldsymbol{\widetilde \omega}$, here $\mathcal{\widetilde F}\neq\mathcal{F^*}$ or  $\boldsymbol{\widetilde \omega}\neq c \boldsymbol{\omega}$, then the above inequality   (\ref{rankbound})  holds strictly:
	\begin{equation}\label{rankbound2}
	\rank \boldsymbol{{H}}(t,l)<  r+n-1.
	\end{equation}
\end{lemma}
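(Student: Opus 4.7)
My plan is to exhibit explicit nonzero vectors in $\ker \boldsymbol{H}(t,l)$ and count them, using the derivation that produced equation (\ref{recoveryequationbyH}) as the basic tool. That derivation shows: whenever a weight vector $\boldsymbol\omega\in\mathbb{R}^r$ (padded with zeros for features outside some relevant subset) renders $(\boldsymbol{x}^*_{1:T},\boldsymbol{u}^*_{1:T})$ optimal, then the KKT conditions (\ref{KKT_condition})--(\ref{kktmat}) provide costates $\boldsymbol{\lambda}_{1:T}$ and the row-restriction (\ref{kktmatin}) combined with invertibility of $\boldsymbol{F}_x(t,l)$ yields $\col\{\boldsymbol\omega,\boldsymbol{\lambda}_{t+l}\}\in\ker\boldsymbol{H}(t,l)$. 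The rank bounds will follow by producing one, respectively two, linearly independent such vectors.

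For the first claim, I would apply this construction to the relevant feature set $\mathcal{F}^*$ and the padded weight vector $\boldsymbol\omega$ from (\ref{rm_weightvector}). Since $\boldsymbol\omega^*\neq\boldsymbol 0$, the first $r$ coordinates of $\col\{\boldsymbol\omega,\boldsymbol{\lambda}_{t+l}\}$ are nonzero, so this is a nontrivial element of $\ker\boldsymbol{H}(t,l)$. Hence $\dim\ker\boldsymbol{H}(t,l)\ge 1$ and $\rank\boldsymbol{H}(t,l)\le r+n-1$.

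For the strict inequality I would repeat the construction with the alternative pair $(\widetilde{\mathcal F},\widetilde{\boldsymbol\omega})$, padding $\widetilde{\boldsymbol\omega}$ with zeros to produce $\widetilde{\boldsymbol\omega}_{\text{ext}}\in\mathbb{R}^r$ and obtaining a corresponding boundary costate $\widetilde{\boldsymbol\lambda}_{t+l}$; this yields a second kernel element $\col\{\widetilde{\boldsymbol\omega}_{\text{ext}},\widetilde{\boldsymbol\lambda}_{t+l}\}$. The key step is to argue linear independence from the first vector. If they were proportional, then $\widetilde{\boldsymbol\omega}_{\text{ext}}=c\,\boldsymbol\omega$ for some scalar $c$. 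In the case $\widetilde{\mathcal F}\neq\mathcal{F}^*$, the two extended weight vectors have different zero patterns (each is supported exactly on the index set of its generating feature subset), so proportionality forces $c=0$ and then $\widetilde{\boldsymbol\omega}_{\text{ext}}=\boldsymbol 0$, contradicting that $\widetilde{\boldsymbol\omega}$ is a nontrivial weight vector. In the remaining case, $\widetilde{\mathcal F}=\mathcal{F}^*$ but $\widetilde{\boldsymbol\omega}\neq\boldsymbol\omega^*$, and the hypothesis directly rules out proportionality. Therefore $\dim\ker\boldsymbol{H}(t,l)\ge 2$ and $\rank\boldsymbol{H}(t,l)\le r+n-2 < r+n-1$.

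The main obstacle I anticipate is of an interpretive nature rather than a technical one: the statement "$\widetilde{\boldsymbol\omega}\neq\boldsymbol\omega^*$" must be read in the sense of \emph{not being a nonzero scalar multiple}, consistent with the paper's convention that weight vectors are identified up to a positive scaling (as made explicit just after (\ref{costfun_ex})); otherwise a plain $\widetilde{\boldsymbol\omega}=c\boldsymbol\omega^*$ with $c\neq 1$ would satisfy the hypothesis yet produce only a rescaled copy of the first kernel element. I would make this convention explicit at the start of the second part of the proof. A minor technical point I would also verify is that the costate $\widetilde{\boldsymbol\lambda}_{t+l}$ in the second kernel vector is guaranteed to exist: this is immediate from applying the full-horizon KKT system (\ref{kktmat}) to the alternative cost function before restricting to the observation window, exactly as was done to derive (\ref{recoveryequationbyH}) in the first place.
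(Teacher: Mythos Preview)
Your proposal is correct and follows essentially the same route as the paper: exhibit one (respectively two) nonzero kernel vectors of $\boldsymbol{H}(t,l)$ by padding the weight vectors of the relevant feature subsets and invoking the derivation that leads to (\ref{recoveryequationbyH}). In fact you are more careful than the paper at the second step: the paper only argues that the two padded weight vectors are \emph{distinct} and concludes nullity $\ge 2$, whereas you correctly note that linear \emph{independence} is what is needed and supply the missing case analysis (and flag the necessary reading of ``$\boldsymbol{\widetilde\omega}\neq\boldsymbol\omega^*$'' as ``not a scalar multiple'').
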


\begin{proof}
	Please see Appendix \ref{prooflemma3}.$\qed$
\end{proof}
\noindent
 {Lemma \ref{lemma3} states that if a candidate feature set contains as a subset the relevant features under which the system  trajectory $\boldsymbol{\xi}$ is optimal}, the kernel of the recovery matrix (for any  data segment)  is at least one-dimensional. Moreover, when there exist more than one combination of relevant features among the given candidate features,  which means there exists another subset of relevant features or another independent  weight vector,  then the rank condition  (\ref{theorem_rm_condition}) in Theorem \ref{theorem_rm} is \emph{impossible} to be fulfilled for the trajectory segment $\boldsymbol{\xi}_{t:t+l}$ regardless of the observation length $l$ and starting time $t$ (we will also experimentally illustrate this in Section \ref{rankdiscussions}). This also  implies that though  Assumption~\ref{featureassumption}  is likely to be satisfied by using
a larger feature set  that covers all possible features, it may also lead to the non-uniqueness of relevant features. On the other hand, if Assumption \ref{featureassumption} fails to hold, that is, the 
candidate feature set $\mathcal{F}$ does not contain a \emph{complete set} of relevant features,  then, due to the rank non-decreasing property   in Lemma~\ref{lemma2}, the recovery matrix is more likely to have $\rank \boldsymbol{H}(t,l)=r+n$ after increasing the observation length. To sum up, Lemma \ref{lemma3} can be leveraged to investigate whether the selection of candidate features is proper or not.

Combining Lemma \ref{lemma2} and Lemma \ref{lemma3}, we are able to show: under Assumption \ref{featureassumption}, (i) if the rank of the recovery matrix is less than $r+n-1$, then increasing the observation length $l$ to include additional trajectory points  may increase the rank; and (ii) once the segment  reaches $\rank \boldsymbol{{H}}=r+n-1$, additional
observation data will not  increase the rank of the recovery matrix  and the successful estimate of feature weights  can be found in the kernel of the recovery matrix. We will experimentally demonstrate this later in Section \ref{rankpropertysimulation1} and Section~\ref{rankpropertysimulation2}. In Sections~\ref{rankdiscussions}, we will further analyze how additional  observation data will change the rank of the recovery matrix.

\subsection{Relationship with Prior  Work}\label{relationshiptoexistingwork}
We next discuss the relationship between the above recovery matrix  and existing IOC techniques   \citep{keshavarz2011imputing,puydupin2012convex,englert2017inverse,molloy2017finite,johnson2013inverse,molloy2016discrete,johnson2013inverse,aghasadeghi2014inverse}. In those methods, an \emph{observation of the system's full trajectory} $\boldsymbol{\xi}$ is considered,  for which a set of  optimality equations, such as the KKT conditions \citep{boyd2004convex} or Pontryagin's Minimum principle \citep{pontryagin1962mathematical}, is then established. As developed in \citep{englert2017inverse,molloy2017finite}, based on  optimality conditions, a general form for using  trajectory data to establish a linear constraint on the unknown feature weights $\boldsymbol{\omega}$ can be summarized as
\begin{equation}\label{compareequ1}
\boldsymbol{M}(\boldsymbol{\xi}){\boldsymbol{\omega}}=\boldsymbol{0},
\end{equation}
where $\boldsymbol{M}(\boldsymbol{\xi})$ is the coefficient matrix that depends on the  trajectory data $\boldsymbol{\xi}$. An implicit requirement by those methods  is that the  \emph{observed data $\boldsymbol{\xi}$ itself has to  be optimal} with respect to the  cost function, thus    \emph{full trajectory data $\boldsymbol{\xi}_{0:T}$ is generally required} (otherwise, incomplete data $\boldsymbol{\xi}_{t:t+l}\subseteq\boldsymbol{\xi}$ \emph{itself} in general does not  optimize the  cost function).

Conversely,  through the  recovery matrix developed in this paper,  any \emph{trajectory segment} $\boldsymbol{\xi}_{t:t+l}\subseteq\boldsymbol{\xi}$   poses a linear constraint on ${\boldsymbol{\omega}}$ by
\begin{equation}\label{compareequ2}
\boldsymbol{H}(t,l)\begin{bmatrix}
{\boldsymbol{\omega}}\\
\boldsymbol{\lambda}_{t{+}l{+}1}
\end{bmatrix}=\boldsymbol{H}_1(t,l){\boldsymbol\omega}+\boldsymbol{H}_2(t,l) \boldsymbol{\lambda}_{t{+}l{+}1}
=\boldsymbol{0}.
\end{equation}
\noindent
Comparing  (\ref{compareequ1}) with (\ref{compareequ2}), we have the following comments.
\begin{itemize}
	\item[1)] If we consider the segment   $\boldsymbol{\xi}_{t:t+l}=\boldsymbol{\xi}_{0:T}$, i.e., given the full trajectory  $\boldsymbol{\xi}$, then, due to $\boldsymbol \lambda_{T+1}=\boldsymbol{0}$ ( {assuming the end-free optimal control setting}),   (\ref{compareequ2}) becomes
	\begin{equation}\label{compareequ3}
	\boldsymbol{H}_1(0,T){\boldsymbol\omega}=\boldsymbol{0}.
	\end{equation}
	Comparing (\ref{compareequ3}) with (\ref{compareequ1}) we immediately obtain 
	\begin{equation}\label{thesame}
	 \boldsymbol{H}_1(0,T)=\boldsymbol{M}(\boldsymbol{\xi}).
	\end{equation} Thus, the coefficient matrix $\boldsymbol{M}(\boldsymbol{\xi})$ that is commonly used in existing IOC methods can be considered as a special case of the recovery matrix when the available data is the full trajectory, i.e., $\boldsymbol{\xi}_{t:t+l}=\boldsymbol{\xi}_{0:T}$.

	\item[2)] However, as  in (\ref{thesame}), the  coefficient matrix $\boldsymbol{M}(\boldsymbol{\xi})$ only corresponds to the first term of the recovery matrix, i.e., $\boldsymbol{H}_1(t,l)$.  A key difference of the recovery matrix  is its ability to handle any incomplete data $\boldsymbol{\xi}_{t:t+l}\subseteq\boldsymbol{\xi}$. Since the incomplete data $\boldsymbol{\xi}_{t:t+l}$ itself may not be optimal with respect to the objective function  when $t+l<T$, the \emph{unseen future}  information thus must be taken care of  if one wants to  successfully learn the unknown weights $\boldsymbol{\omega}$. As in  (\ref{compareequ2}), the recovery matrix accounts for  such    {unseen} future information via its second term $\boldsymbol{H}_2(t,l)$ and the unknown costate  $\boldsymbol{\lambda}_{t{+}l{+}1}$. As we will demonstrate later in  experiments (Section \ref{comparison}), such a step to account for the \emph{unseen} future information can enable  successful learning of the cost function  using a very small segment  of the full trajectory.  {Also importantly,  as we have analyzed in Section \ref{definitionrecoverymatrix},  such an advantage  enables the recovery matrix  to solve  the IOC problems for \emph{infinite-horizon optimal control systems}. We will also demonstrate this in Section \ref{IOCinfLQR} with a numerical example.}

	\item[3)] In addition to the capability of dealing with incomplete observation data, the recovery matrix can also provide    insights and computational efficiency for solving IOC problems, as presented in  Section~\ref{propertySection}.  Such properties and advantages, however, cannot be achieved using the coefficient matrix $\boldsymbol{M}(\boldsymbol{\xi})$  in existing IOC methods
	\citep{keshavarz2011imputing,puydupin2012convex,englert2017inverse,molloy2017finite,johnson2013inverse,molloy2016discrete,johnson2013inverse,aghasadeghi2014inverse}.
\end{itemize}

\subsection{Implementation of Rank Evaluation}  \label{recoverycondition}
In practice, directly checking the rank of the recovery matrix is challenging  due to (i)  data noise; (ii) near-optimality of demonstrations, i.e., the observed trajectory slightly deviates from the optimal one; and (iii) computational  error. Thus,
one can use  the following strategies to evaluate the rank of the recovery matrix.

\subsubsection*{Normalization.}
When the observed  data is of low magnitude, the recovery matrix may have the entries rather close to zeros, which may affect the matrix rank evaluation due to computing rounding error. Hence we perform a  normalization of the recovery matrix  before verifying its rank,
\begin{equation}\label{Hnorm}
\boldsymbol{\bar{H}}(t,l)= \frac{\boldsymbol{{H}}(t,l)}{\norm{\boldsymbol{{H}}(t,l)}_F},
\end{equation}
where $\norm{\cdot}_F$ is the Frobenius norm and we only consider the recovery matrix that is not a zero matrix. Then
\begin{equation}
\rank \boldsymbol{\bar{H}}(t,l)=\rank \boldsymbol{H}(t,l).
\end{equation}

\subsubsection*{Rank Index.}
Since we are only interested in whether the rank of the recovery matrix satisfies $\rank\boldsymbol{H}(t,l)=r+n-1$, instead of directly investigating the rank,  we choose to look at the singular values of $\boldsymbol{{\bar{H}}}(t,l)$ by introducing the following  rank index
\begin{align}
\kappa(t,l)=\begin{cases}
\, 0, & \text{if} \quad  \sigma_{2}(\boldsymbol{\bar{H}}(t,l))=0,\\
\, {\sigma_{2}(\boldsymbol{\bar{H}})}/{\sigma_{1}(\boldsymbol{\bar{H}})}, & \text{otherwise.}
\end{cases}
\label{rankindex} 
\end{align}
The condition $\rank\boldsymbol{H}(t,l)=r+n-1$ is thus equivalent to $\kappa(t,l)=+\infty$. However, due to  data noise,   $\kappa(t,l)=+\infty$  usually cannot be reached and thus is a finite value (we will demonstrate this in Section \ref{parametersetting}). We thus   pre-set a threshold  $\gamma$ and  verify  
\begin{equation}\label{rankcase1}
\kappa(t,l)\geq\gamma
\end{equation}
 to decide whether  $\rank\boldsymbol{H}(t,l)=r+n-1$ is fulfilled or not. Later in Section \ref{parametersetting}, we will show how  observation data and noise levels influence the rank index  $\kappa(t,l)$, and how to accordingly choose   a proper $\gamma$.

\section{Proposed IOC Approaches}
Using the   recovery matrix, in this section we  develop the IOC techniques  using  incomplete trajectory observations to learn the cost function  formulated in Section \ref{problemformulation}. Furthermore, we will propose an incremental IOC algorithm  by automatically finding the minimal required observation length.

\subsection{IOC using Incomplete Trajectory Observations}
The following corollary states a method to use an observation of the incomplete trajectory to achieve a successful estimate of  weights for given relevant features. 
\begin{corollary}[IOC using Incomplete Trajectory Observations]\label{coro1}
	For the optimal control system in (\ref{dynamics}), given an incomplete trajectory observation $\boldsymbol{\xi}_{t:t+l}\subseteq\boldsymbol{\xi}$ and a relevant feature set $\mathcal{F}=\mathcal{F}^*$ in
	 (\ref{relevantfeatureset}),  the recovery matrix $\boldsymbol{H}(t,l)$ is  defined as in Definition \ref{def_rm}.  	
	If
	\begin{equation} \label{theo2condition}
	\rank \boldsymbol{H}(t,l)=s+n-1,
	\end{equation}
	and  a nonzero vector $\col\{\hat{\boldsymbol{\omega}},\hat{\boldsymbol{\lambda}}\}\in\ker \boldsymbol{H}(t,l)$ with $\hat{\boldsymbol{\omega}}\in \mathbb{R}^s$, then  $\hat{\boldsymbol{\omega}}$ is a successful estimate of $\boldsymbol \omega$, i.e., there must exist a non-zero constant $c$ such that $\hat{\boldsymbol{\omega}}=c\boldsymbol \omega$.
\end{corollary}
\begin{proof}
	Since Corollary \ref{coro1} is a special case of Theorem \ref{theorem_rm}, by following a similar procedure as in proof of Theorem \ref{theorem_rm}, we can show that there must exist a costate $\boldsymbol{\lambda}^*_{t+l+1}$ such that the weight vector $\boldsymbol{\omega}$ in  (\ref{costfunction})  jointly with $\boldsymbol{\lambda}_{t+l+1}^*$ satisfy
	\begin{equation}\label{pftheo2_Hsolution}
	\boldsymbol{H}(t,l)\begin{bmatrix}
	\boldsymbol\omega\\
	\boldsymbol\lambda_{t+l+1}^*
	\end{bmatrix}=\boldsymbol{0}.
	\end{equation}
	Since the rank condition (\ref{theo2condition}) holds, it follows that the nullity of $\boldsymbol{H}(t,l)$ is one. Then for any non-zero vector $\col\{\hat{\boldsymbol{\omega}},\hat{\boldsymbol{\lambda}}\} \in \ker \boldsymbol{H}(t,l)$, there must exist a constant $c\neq 0$ such that $\hat{\boldsymbol{\omega}}=c\boldsymbol\omega$.  $\hat{\boldsymbol{\omega}}$ thus is a successful estimate of $\boldsymbol\omega$, which completes the proof.$\qed$
\end{proof}

\begin{remark}\label{remark2.2}
	Suppose that  in Corollary \ref{coro1},  (\ref{theo2condition}) is not satisfied.  According to Lemma~\ref{lemma3},  $\rank\boldsymbol{H}(t,l)<s+n-1$ holds and thus  dimension of $\ker\boldsymbol{H}(t,l)$ is at least two. This means that 
	another  weight vector,  independent of $\boldsymbol\omega$, could be found in  $\ker\boldsymbol{H}(t,l)$, and the current segment $\boldsymbol{\xi}_{t:t+l}$ may be  generated by this different  weight vector.  In this case,  true  weights  are  not distinguishable or recoverable with respect to  $\boldsymbol{\xi}_{t:t+l}$. The reason for this case can be insufficiently long observation  or low data informativeness, both of  which may be remedied by including additional data  (i.e., increase $l$) according to Lemma \ref{lemma2} (we will illustrate  this later in experiments).
\end{remark}

\subsection{Incremental IOC Algorithm}\label{sectionalgorithm}
Combining Theorem \ref{theorem_rm} and the properties of the recovery matrix, one has the following conclusions: (i) as
 observations of more data points  may contribute to increasing  the rank of the  recovery matrix (Lemma \ref{lemma2}), which is however    bounded from above (Lemma \ref{lemma3}),   thus  the minimal required observation length $l_{\min}$  that reaches the rank upper bound  can be found; (ii) from Lemmas \ref{lemma2} and \ref{lemma3}, the minimal required observation length can  be found even if additional irrelevant features exist; and (iii)  from Lemma \ref{lemma1}, the minimal required observation length can be found efficiently. In sum, we have the following incremental IOC approach.

\begin{corollary} [Incremental IOC Approach] \label{corollary2.2}
	Given  candidate features   $\mathcal{F}$  satisfying Assumption \ref{featureassumption},  the recovery matrix $\boldsymbol{H}(t,l)$, starting from $t$,  is updated at  each time  step with a new observed  point $\boldsymbol{\xi}_{t+l+1}=(\boldsymbol{x}^*_{t+l+1},\boldsymbol{u}^*_{t+l+1})$ via Lemma~\ref{lemma1}. Then the minimal observation length that suffices for a successful estimate of the feature weights is
	\begin{align}\label{minobservations}
	l_{\min}(t)=\min\big\{l\hspace{1pt}|\hspace{1pt}\rank\boldsymbol{H}(t,l)=|\mathcal{F}|+n-1\big\}.
	\end{align} 
	For any nonzero vector  $\col\{\hat{\boldsymbol\omega},\hat{\boldsymbol\lambda}\}\in\ker\ \boldsymbol{H}(t,l_{\min}(t))$ with
	$\hat{\boldsymbol\omega}\in\mathbb{R}^{|\mathcal{F}|}$,   $\hat{\boldsymbol\omega}$ is a successful estimate of the weights for   ${\mathcal{F}}$ with the  weights for irrelevant features being zeros.
\end{corollary}

\begin{proof}
	Corollary \ref{corollary2.2} is a direct application of Theorem \ref{theorem_rm}, Lemma \ref{lemma1}, Lemma \ref{lemma2}, and Lemma \ref{lemma3}.  {$\qed$}
\end{proof}

From  Corollary \ref{corollary2.2}, we note that starting from time $t$, the minimal required observation length $l_{\min}(t)$ to solve IOC problems is the one  satisfying (\ref{minobservations}). As we will show later in experiments  (Sections \ref{rankdiscussions}, \ref{rankpropertysimulation2}, \ref{observationnoise}, and \ref{irrelevantfeatures}), 
$l_{\min}(t)$ varies depending on the  informativeness of   data $\boldsymbol{\xi}_{t:t+l}$ and the selected candidate features. Whatever influences $l_{\min}(t)$, one can always find a necessary lower bound of the minimal required observation length due to  the size of the recovery matrix, $\boldsymbol{H}(t,l)\in\mathbb{R}^{ml\times(|\mathcal{F}|+n)}$, and  matrix rank properties, that is,
\begin{equation}
l_{\min}(t)\geq\ceil[\bigg]{\frac{|\mathcal{F}|+n-1}{m}},
\label{unifomlb}
\end{equation}
where $\lceil{\cdot}\rceil$ is the ceiling operation. (\ref{unifomlb})  implies that including additional irrelevant  features to $\mathcal{F}$ will require more data in order to successfully solve  IOC problems (as shown later in Section \ref{irrelevantfeatures}).

In practice, directly applying Corollary \ref{corollary2.2} is challenging in the presence of  data noise, near-optimality of  trajectory, computing error, etc. Thus we adopt the following strategies for implementation. First, (\ref{minobservations}) can be investigated based on the rank index  (\ref{rankindex}) by checking (\ref{rankcase1}) (the choice of $\gamma$ will be discussed later in Section \ref{parametersetting}). Second, the computation of a successful estimate can be implemented by solving the following constrained optimization
{\begin{align} \label{compute_aug_cost}
	\hat{\boldsymbol\omega} 
	=\arg \min_{\boldsymbol\omega,\boldsymbol{\lambda}}\norm[\bigg]{\bar{\boldsymbol{H}}(t,{l_{\min}}(t))
		\begin{bmatrix}
		\boldsymbol\omega \\
		\boldsymbol\lambda
		\end{bmatrix}}^2,
	\end{align}}%
subject to 
\begin{equation}\label{compute_aug_cost2}
\sum\nolimits_{i=1}^{|\mathcal{F}|}\omega_i=1,
\end{equation}
where $\norm{\cdot}$ denotes the $l_2$ norm and $\bar{\boldsymbol{H}}$ is the normalized recovery matrix (see (\ref{Hnorm})). Here, to avoid trivial solutions, we add the  constraint (\ref{compute_aug_cost2}) to normalize the weight estimate to have sum of one, as used in \citep{englert2017inverse}. 

In sum, the implementation   of the proposed incremental IOC approach in corollary \ref{corollary2.2}  is presented  in Algorithm~\ref{algorithm1}. Algorithm 1 permits  arbitrary observation starting time, and the observation length is automatically found by checking the rank condition using (\ref{rankindex}) and (\ref{rankcase1}). The  algorithm can be viewed as an \emph{adaptive-observation-length} IOC algorithm.

\begin{algorithm2e}[h] 
	\caption{Incremental IOC Algorithm}
	\label{algorithm1}
	\SetKwInput{Initialization}{Initial}
	\KwIn{a candidate feature set $\mathcal{F}$, a  threshold $\gamma$;}

	\Initialization{Any observation starting time $t$; \newline
		 Initialize   $l{=}1$, $\boldsymbol{H}(t,l)$ with $\small\boldsymbol{\xi}_{t:t\text{+}1}{=}(\boldsymbol{x}^*_{t:t\text{+}1},\boldsymbol{u}^*_{t:t\text{+}1})$ 
		  via (\ref{iterH0});
}
	\While{ ${\boldsymbol{H}}(t,l)$   not satisfying (\ref{rankcase1})}{
		Obtain subsequent  data $\boldsymbol{\xi}_{t+l+1}=(\boldsymbol{\boldsymbol{x}}^*_{t+l+1}, \boldsymbol{u}^*_{t+l+1})$\;
		Update $\boldsymbol{H}(t,l)$ with $\boldsymbol{\xi}_{t+l+1}$ via (\ref{iterH1})\;
		$l\leftarrow l+1$\;
	}
	minimal required observation length: ${l_{\min}}(t)=l$ \;
	compute a successful estimate $\hat{\boldsymbol\omega}$ via   (\ref{compute_aug_cost})-(\ref{compute_aug_cost2}).
\end{algorithm2e}

\section{Numerical Experiments} \label{algo1simulation}
We evaluate the proposed  method on two systems. First, on a linear quadratic regulator (LQR) system, we  demonstrate the  rank properties of the recovery matrix,  show  its capability of handling incomplete trajectory data  by  comparing with the related IOC methods, and  {demonstrate its capability to solve IOC for infinite-horizon LQR}. Second, on a simulated two-link robot arm, we evaluate the proposed  techniques in terms of observation noise, including irrelevant features, and parameter settings. Throughout evaluations, we quantify the accuracy of  a weight estimate $\hat{\boldsymbol{\omega}}$ by introducing the following estimation error:
\begin{equation} \label{estimationerror}
e_{\boldsymbol{\omega}}=\inf_{c>0}\frac{\norm{c\hat{\boldsymbol\omega}-{\boldsymbol\omega}}}{\norm{\boldsymbol\omega}},
\end{equation} where $\norm{\cdot}$ denotes the $l_2$ norm, $\hat{\boldsymbol\omega}$ is  the  weight estimate,  and  $\boldsymbol\omega$ is the ground truth. Obviously, $e_{\boldsymbol{\omega}}=0$ means that $\hat{\boldsymbol\omega}$ is a successful estimate of $\boldsymbol{\omega}$. 
The source codes are available at the following link: {\small{\url{ https://github.com/wanxinjin/IOC-from-Incomplete-Trajectory-Observations}}}.

\subsection{Evaluations on LQR Systems}

Consider a  {finite-horizon free-end LQR system} where the  dynamics is
\renewcommand{\arraystretch}{1.2}
\begin{equation}\label{lineardyn}
\boldsymbol x_{k+1}=
\begin{bmatrix}
-1 & 1\\
0 & 1
\end{bmatrix}
\boldsymbol x_{k}+
\begin{bmatrix}
1\\
3
\end{bmatrix}
\boldsymbol u_{ {k}},
\end{equation}
with  initial  $\boldsymbol x_0=[2,-2]^{\prime}$, and  quadratic cost function is 
\begin{align} \label{lqrcost}
J=\sum_{k= {0}}^{T}\big(\boldsymbol x_k^{\prime}Q \boldsymbol x_k+\boldsymbol u_k^{\prime}R \boldsymbol u_{k}\big),
\end{align}
with the time horizon $T=50$. Here, $Q$ and $R$ are positive definite matrices and  assumed to  {have the  structure}
\begin{equation}
Q=\begin{bmatrix}
q_1 &0\\
0&q_2
\end{bmatrix},
\quad
R=r,
\label{lqrcosts}
\end{equation}
respectively. In the feature-weight form  (\ref{costfunction}), the cost function (\ref{lqrcost}) corresponds to the feature vector  $\boldsymbol \phi^*=[x_{1}^2, x_{2}^2, u^2]^{\prime}$ and weights $\boldsymbol \omega=[q_1,q_2,r]^{\prime}$. We here set   $\boldsymbol \omega=[0.1,0.3,0.6]^{\prime}$ to generate the optimal trajectory of the LQR system, which is plotted in Fig.  \ref{figlqr}.  {In IOC problems, we are given the features  $\boldsymbol \phi^*$; the goal is to solve a successful estimate of $\boldsymbol{\omega}$ using the  optimal trajectory data in Fig.~\ref{figlqr}.}

\begin{figure}[h]
	\centering
	\includegraphics[width=0.90\columnwidth]{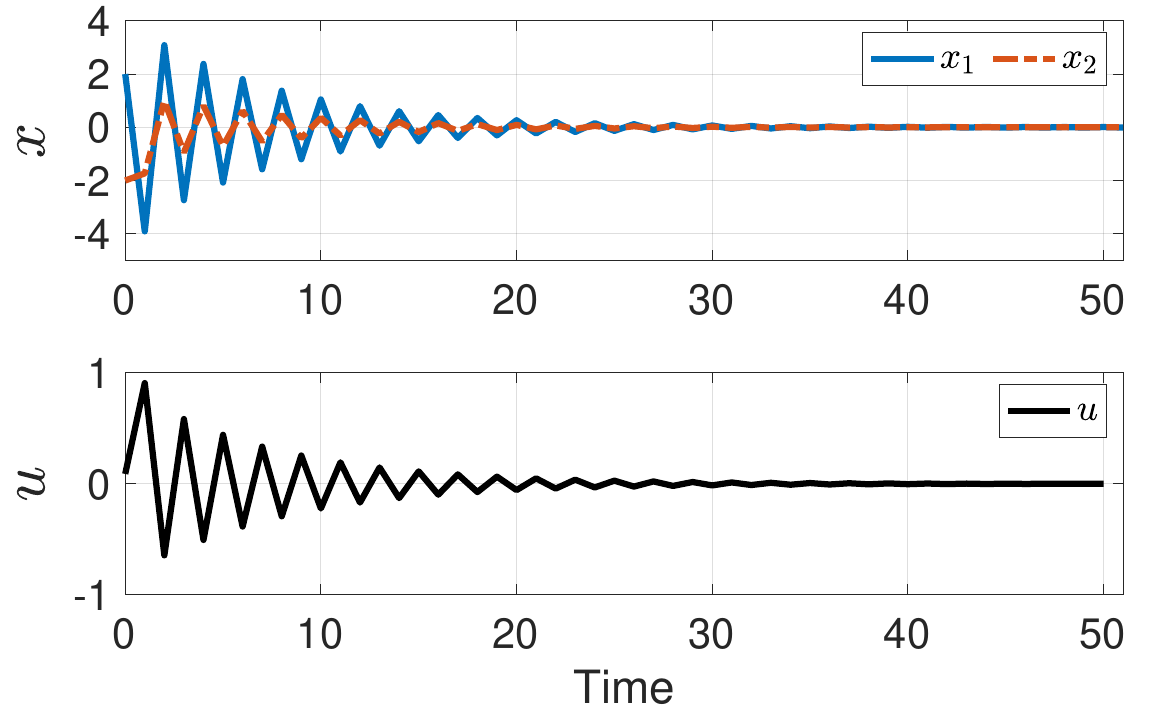}
	\caption{The optimal trajectory   of a LQR system (\ref{lineardyn})-(\ref{lqrcost}) using the weights $\boldsymbol \omega=[0.1,0.3,0.6]^{\prime}$.}
	\label{figlqr}
\end{figure}

\smallskip

\subsubsection{Minimal Required Observations for IOC.}\label{rankpropertysimulation1} Based on the above LQR system, we here illustrate  how the recovery matrix can be used to check whether  incomplete trajectory data suffices for the minimal    observation required for  a successful weight estimation. Given the features $\boldsymbol{\phi^*}=[x_{1}^2, x_{2}^2, u^2]^{\prime}$, we set the  observation  starting time $t=0$, and  incrementally increase the observation length $l$  from $1$ to horizon $T=50$. For each observation length  $l$, we check the rank of the recovery matrix $\boldsymbol{H}(0,l)$ and  solve the weights from the kernel of $\boldsymbol{H}(0,l)$ (the weights are normalized to have sum of one). The results are plotted in Fig. \ref{lqrrank}.

	\begin{figure}[h]
	\centering
	\includegraphics[width=0.9\columnwidth]{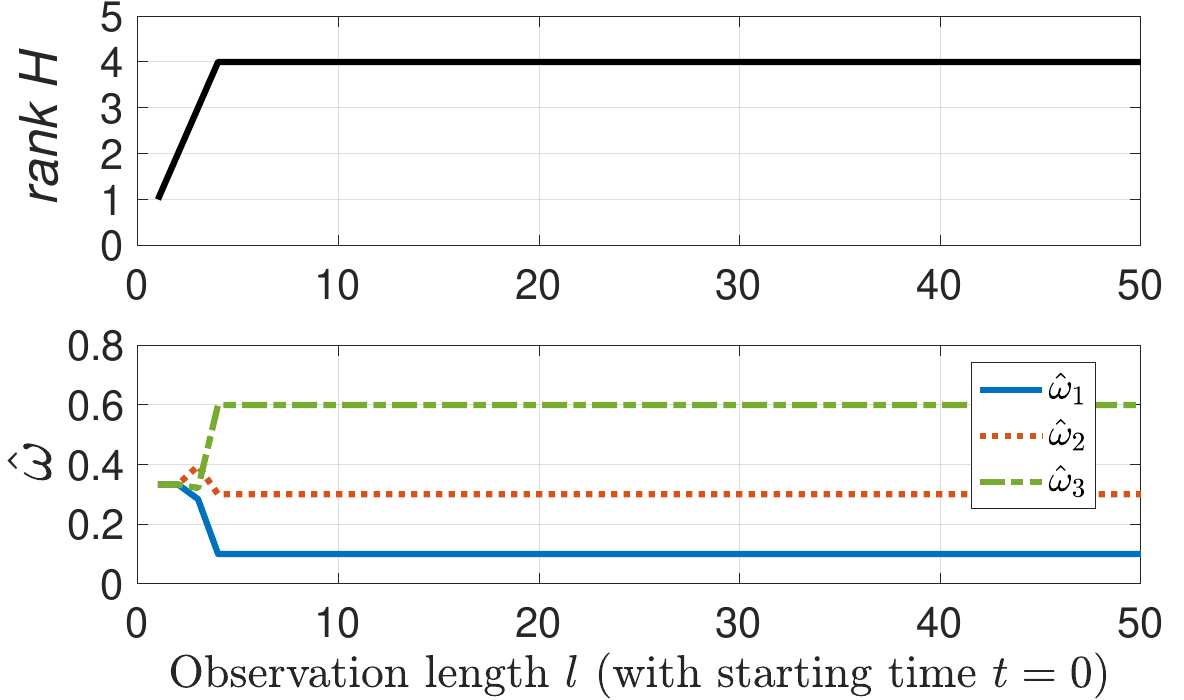}
	\caption{The rank of the recovery matrix and  weight estimate when the observation starts at $t=0$ and  the   observation length $l$ increases from $1$ to $T$. The upper panel shows the rank of the recovery matrix $\boldsymbol{H}(0,l)$ versus $l$; and the bottom panel shows the corresponding weight estimate for each $l$. Note that the given features are $\boldsymbol{\phi^*}=[x_{1}^2, x_{2}^2, u^2]^{\prime}$ and the ground truth weights are $\boldsymbol \omega=[0.1,0.3,0.6]^{\prime}$.  {For $l<4$, since the dimension of the kernel of $\boldsymbol{H}(0,l)$ is at least 2 and thus $\boldsymbol{H}(0,l)[\hat{\boldsymbol{\omega}},\boldsymbol{\lambda}]^\prime=\boldsymbol{0}$ has  multiple solutions of $[\hat{\boldsymbol{\omega}},\boldsymbol{\lambda}]^\prime$,  we  choose the solution $\hat{\boldsymbol\omega}$ from the kernel of $\boldsymbol{H}(0,l)$ randomly.} }
	\label{lqrrank}	
\end{figure}

As shown in the upper panel in Fig. \ref{lqrrank},  including additional   trajectory data points, i.e., increasing the observation length $l$ (from $1$), leads to an increase of the rank of the recovery matrix. When $l=4$ $\rank \boldsymbol{H}(0,l)$ reaches to $4$, which is the rank upper bound $n+r-1=4$, and then $\rank \boldsymbol{H}(0,l)=4$ for all  $l\geq 4$. This illustrates the  properties of the recovery matrix  in Lemma \ref{lemma2} and Lemma \ref{lemma3}.  From the bottom panel in Fig. \ref{lqrrank}, we  see that when $l<4$, for which $\rank \boldsymbol{H}(0,l)<4$, the weight estimate $\hat{\boldsymbol{\omega}}$  is not a successful estimate of $\boldsymbol{\omega}$.  {When $\rank \boldsymbol{H}(0,l)<4$, since the dimension of the kernel of $\boldsymbol{H}(0,l)$ is at least 2 and thus $\boldsymbol{H}(0,l)[\hat{\boldsymbol{\omega}},\boldsymbol{\lambda}]^\prime=\boldsymbol{0}$ has  multiple solutions  $[\hat{\boldsymbol{\omega}},\boldsymbol{\lambda}]^\prime$,  we  choose the solution $\hat{\boldsymbol\omega}$ from the kernel of $\boldsymbol{H}(0,l)$ randomly}. After  $l\geq 4$ when $\rank \boldsymbol{H}(0,l)=4$, the estimate converges to a successful estimate, thus indicating the effectiveness of using the rank condition in (\ref{minobservations}) to  check whether an incomplete observation suffices for the  minimal required observation. 

\begin{figure*}[t]
	\centering
	\begin{subfigure}[b]{0.24\textwidth}
		\centering
		\includegraphics[width=\textwidth]{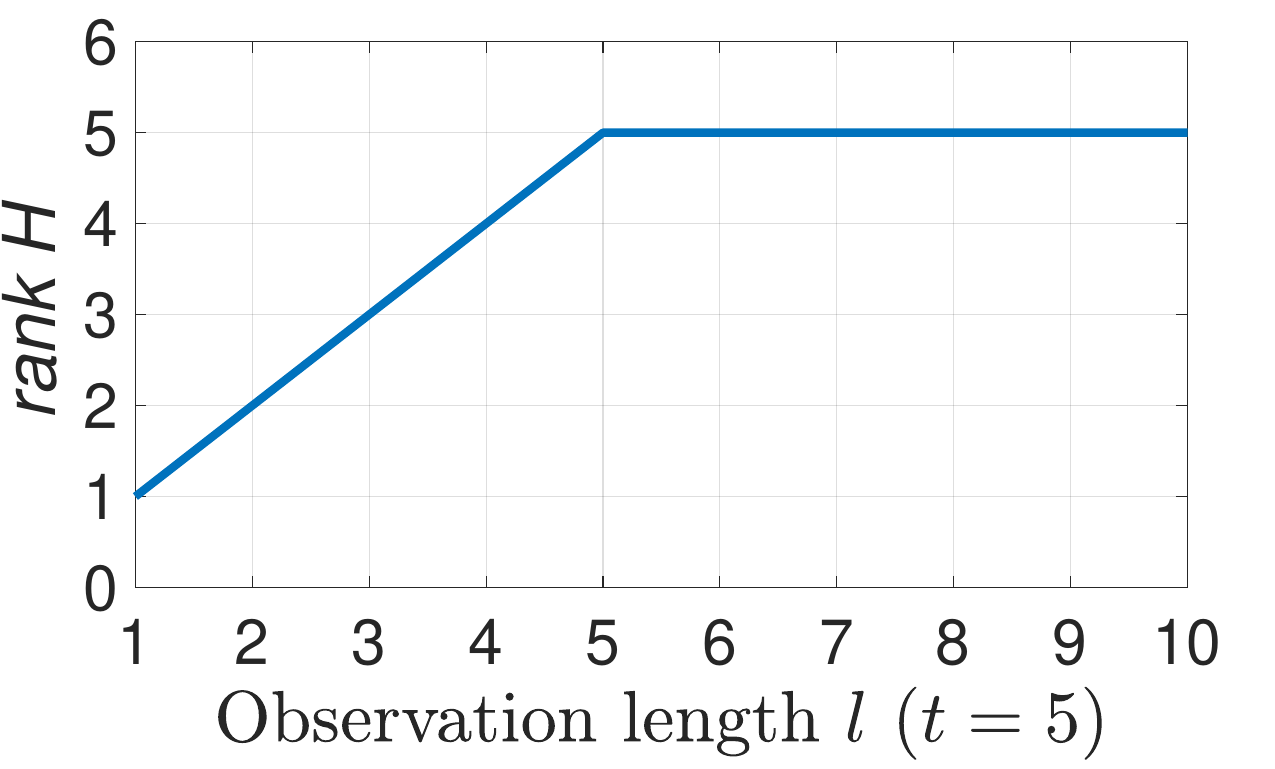}
		\caption{ $t{=}5$, $\mathcal{F}=\{x_1^2,x_2^2,u^2,u^3\}$}
		\label{rankH.1}
	\end{subfigure}
	\hfill
	\begin{subfigure}[b]{0.24\textwidth}
		\centering
		\includegraphics[width=\textwidth]{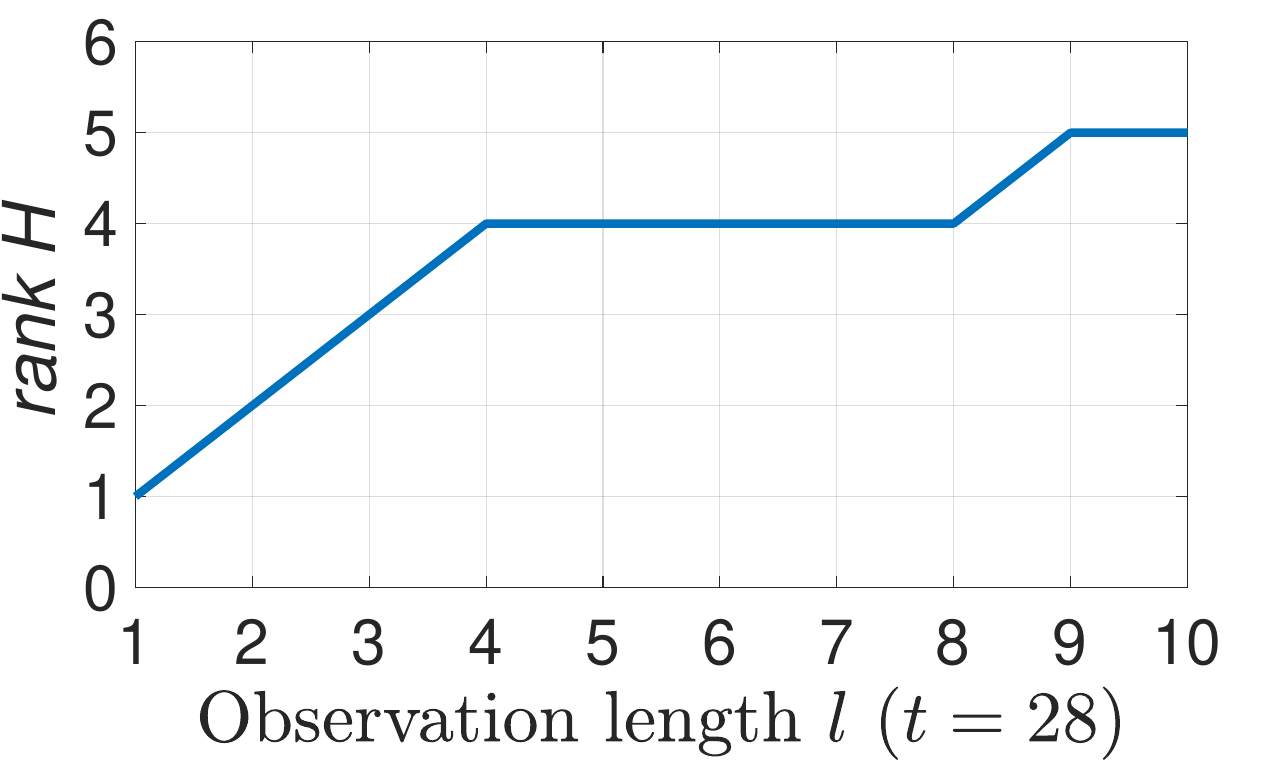}
		\caption{$t{=}28$, $\mathcal{F}=\{x_1^2,x_2^2,u^2,u^3\}$}
		\label{rankH.2}
	\end{subfigure}
	\hfill
	\begin{subfigure}[b]{0.24\textwidth}
		\centering
		\includegraphics[width=\textwidth]{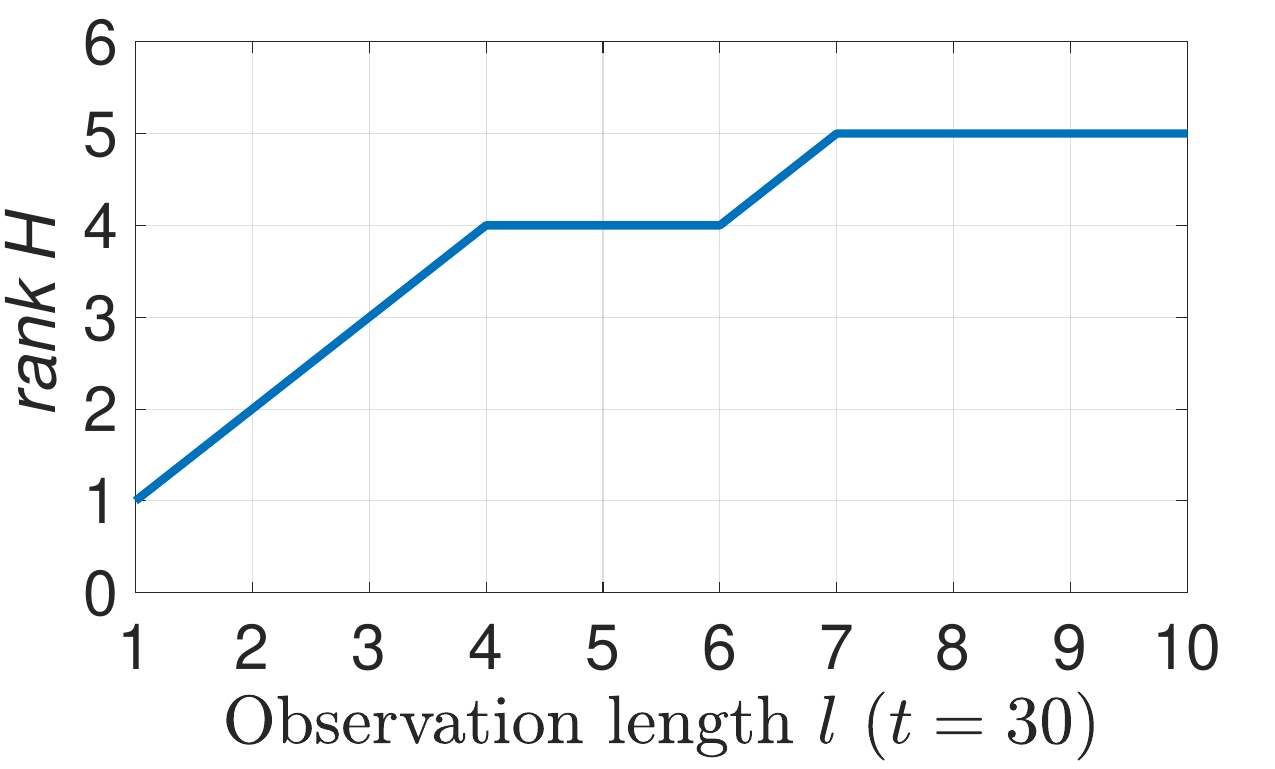}
		\caption{$t{=}30$, $\mathcal{F}=\{x_1^2,x_2^2,u^2,u^3\}$}
		\label{rankH.3}
	\end{subfigure}
	\hfill
	\begin{subfigure}[b]{0.24\textwidth}
		\centering
		\includegraphics[width=\textwidth]{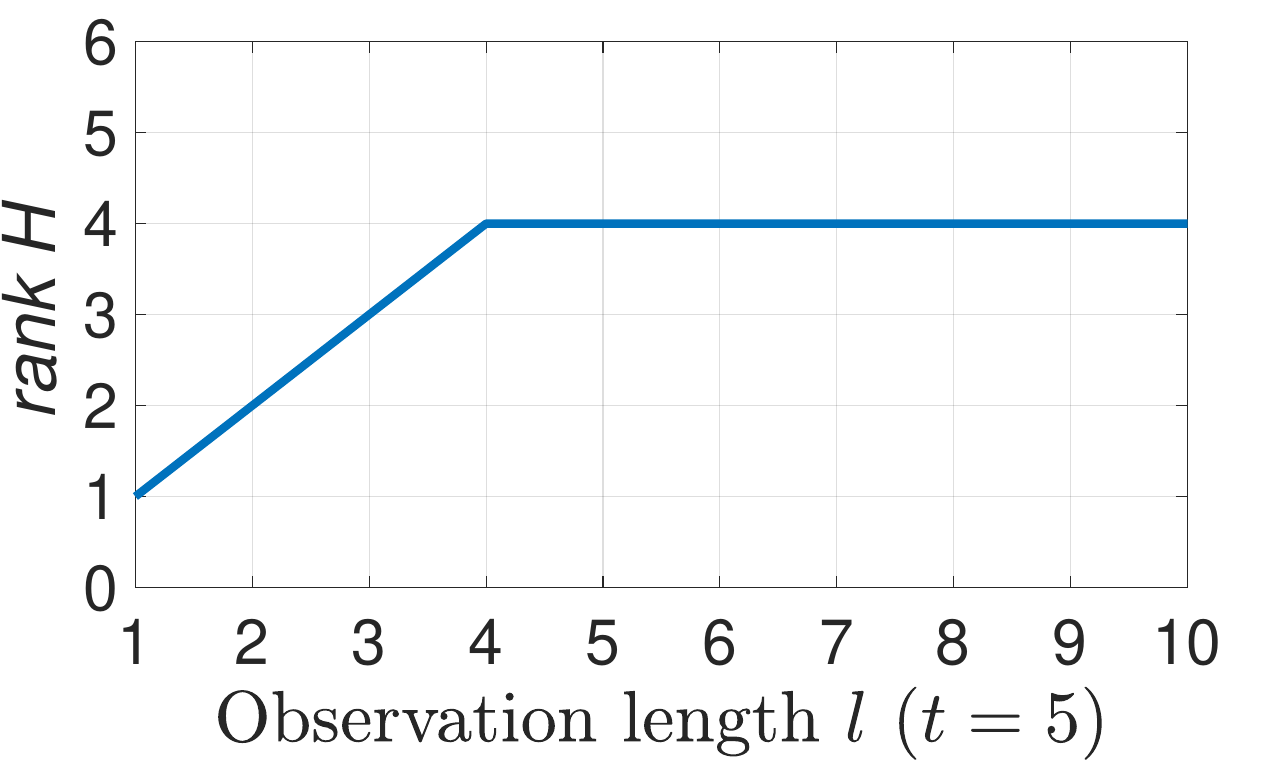}
		\caption{$t{=}5$, $\mathcal{F}=\{x_1^2,x_2^2,u^2,2u^2\}$}
		\label{rankH.4}
	\end{subfigure}
	\caption{The rank of the recovery matrix versus the observation length $l$. For (a), (b), and (c), the observation starting time is at $t=5$, $t=28$, and $t=30$, respectively, and the given candidate feature set is $\mathcal{F}=\{x_1^2,x_2^2,u^2,u^3\}$. For (d), the observation starting time is at $t=5$ and the given candidate feature set is $\mathcal{F}=\{x_1^2,x_2^2,u^2,2u^2\}$.  {In (d), since $\mathcal{F}$ contains two dependent features: $u^2$ and $2u^2$, thus multiple combinations of these  features can be found in  $\mathcal{F}$ to characterize the optimal trajectory, that is, $\{x_1^2, x_2^2, u^2\}$ and $\{x_1^2, x_2^2, 2u^2\}$, and the rank upper bound according to Lemma \ref{lemma3}  is $\rank \boldsymbol{H}(t,l)< r+n-1=5$ and cannot reach $5$.}}
	\label{rankH}
\end{figure*}

\subsubsection{Recovery Matrix Rank for Additional Observations.\label{rankdiscussions}}

Based on the  LQR system, we next show how additional observations affect the rank of the recovery matrix. Here, we vary the observation starting time $t$ and use different candidate feature sets $\mathcal{F}$, and for each case, we incrementally increase the observation length from $l=1$ while checking the rank of the recovery matrix until the rank  reaches its maximum. The results are presented in Fig. \ref{rankH}. For the first three cases in Fig. \ref{rankH.1}-\ref{rankH.3}, we set the observation starting time at $t=5$, $t=28$, and $t=30$, respectively, and use a candidate feature set $\mathcal{F}=\{x_1^2,x_2^2,u^2,u^3\}$; for the fourth case in Fig. \ref{rankH.4}, we  set the observation starting time at $t=5$ and use a candidate feature set $\mathcal{F}=\{x_1^2,x_2^2,u^2,2u^2\}$. Based on the results, we have the following observations and comments.

		1)  {From Fig. \ref{rankH.1}, \ref{rankH.2}, and \ref{rankH.3}}, we can see that additional observation  (i.e., increasing  observation length $l$)  increases or maintains the rank of the recovery matrix, as stated in Lemma \ref{lemma2}, and that continuously increasing the observation length will lead to the upper bound of the recovery matrix's rank, as stated in Lemma~\ref{lemma3}.
		
		2) Comparing Fig. \ref{rankH.1} with Fig. \ref{rankH.4}, we see that although the number of candidate features for both cases are the same, i.e., $|\mathcal{F}|=r=4$, their corresponding maximum  ranks are different: the case in Fig. \ref{rankH.1}  achieves $\max \rank \boldsymbol{H}=5=r+n-1$ (which is the rank condition  (\ref{theorem_rm_condition}) for a successful estimate),  {while   in Fig. \ref{rankH.4} the rank reaches $\max \rank \boldsymbol{H}=4<r+n-1$. This is because  $\mathcal{F}=\{x_1^2,x_2^2,u^2,2u^2\}$ used in Fig. \ref{rankH.4} contains two dependent features, i.e., $u^2$ and $2u^2$, thus multiple combinations of   features,  e.g., $\{x_1^2, x_2^2, u^2\}$ and $\{x_1^2, x_2^2, 2u^2\}$,  can be found in $\mathcal{F}$ to characterize the optimal trajectory. Based on (\ref{rankbound2}) in Lemma \ref{lemma3}, $\rank \boldsymbol{H}(t,l)\leq4$ for all $t$ and $l$ and  the  condition $\rank \boldsymbol{H}(t,l)=5=r+n-1$ for a successful recovery  will never be  fulfilled.}

		3)  Comparing  Fig. \ref{rankH.2}, Fig. \ref{rankH.3} and  Fig. \ref{rankH.1},  {we  note that} in some cases  additional observations  will not increase the rank of the recovery matrix, e.g., when the observation length is $l=5, 6, 7, 8$ in Fig. \ref{rankH.2} and $l=5,6$ in Fig. \ref{rankH.3}. This can be explained using the following relations:
		\begin{align}
		&\rank \boldsymbol{H}(t,l+1)  \nonumber \\
		&=\rank \begin{bmatrix}
		\boldsymbol{H}_{1}(t,l) &\boldsymbol{H}_{2}(t,l) \\
		\frac{\partial \boldsymbol\phi^{\prime}}{\partial \boldsymbol{u}^*_{t+l}}&
		\frac{\partial \boldsymbol{f}^{\prime}}{\partial \boldsymbol{u}^*_{t+l}}
		\end{bmatrix}
		\begin{bmatrix}
		I & \boldsymbol 0 \\
		\frac{\partial \boldsymbol\phi^{\prime}}{\partial \boldsymbol{x}^*_{t+l+1}}&
		\frac{\partial \boldsymbol{f}^{\prime}}{\partial \boldsymbol{x}^*_{t+l+1}}
		\end{bmatrix} \nonumber\\
		&=\rank \begin{bmatrix}
		\boldsymbol{H}_{1}(t,l) &\boldsymbol{H}_{2}(t,l) \\
		\frac{\partial \boldsymbol\phi^{\prime}}{\partial \boldsymbol{u}^*_{t+l}}&
		\frac{\partial \boldsymbol{f}^{\prime}}{\partial \boldsymbol{u}^*_{t+l}}
		\end{bmatrix}  \nonumber\\
		&\geq \rank \begin{bmatrix}
		\boldsymbol{H}_1(t,l) & \boldsymbol{H}_2(t,l)
		\end{bmatrix} = \rank \boldsymbol{H}(t,l), \nonumber
		\end{align}
		where the first two lines  are directly from (\ref{iterH1}), and  the last two lines are  due to $\det (\frac{\partial \boldsymbol{f}^\prime}{\partial \boldsymbol{x}^*_{t+l+1}})=\det\begin{psmallmatrix}
		-1& 0 \\
		1& 1
		\end{psmallmatrix}\neq 0$ and  matrix rank properties. The above equation says that the new observation   $\boldsymbol{\xi}_{t+l+1}=(\boldsymbol{x}^*_{t+l+1},\boldsymbol{u}^*_{t+l+1})$ is incorporated into the recovery matrix $\boldsymbol{H}(t,l)$  in the form of appending  $m$  row vectors $[\frac{\partial \boldsymbol\phi^{\prime}}{\partial \boldsymbol{u}^*_{t+l}}\,\,
		\frac{\partial \boldsymbol{f}^{\prime}}{\partial \boldsymbol{u}^*_{t+l}}]\in \mathbb{R}^{m\times(r+n)}$  to the bottom of $\boldsymbol{H}(t,l)$. If the new observed data point $\boldsymbol{\xi}_{t+l+1}=(\boldsymbol{x}^*_{t+l+1},\boldsymbol{u}^*_{t+l+1})$ is \emph{non-informative}, in other words, if the  appended rows in $[\frac{\partial \boldsymbol\phi^{\prime}}{\partial \boldsymbol{u}^*_{t+l}}\,\,
		\frac{\partial \boldsymbol{f}^{\prime}}{\partial \boldsymbol{u}^*_{t+l}}]$ are \emph{dependent} on the  row vectors in $\boldsymbol{H}(t,l)$,  then according to the matrix rank properties, one will have $\rank\boldsymbol{H}(t,l)=\rank\boldsymbol{H}(t,l+1)$,   thus the new data $\boldsymbol{\xi}_{t+l+1}$  will not increase the rank of the recovery matrix. Otherwise, if the appended rows in $[\frac{\partial \boldsymbol\phi^{\prime}}{\partial \boldsymbol{u}^*_{t+l}}\,\,
		\frac{\partial \boldsymbol{f}^{\prime}}{\partial \boldsymbol{u}^*_{t+l}}]$  are \emph{independent} of the row vectors in $\boldsymbol{H}(t,l)$, that is, the new observed data  $\boldsymbol{\xi}_{t+l+1}=(\boldsymbol{x}^*_{t+l+1},\boldsymbol{u}^*_{t+l+1})$ is \emph{informative}, then this new  $\boldsymbol{\xi}_{t+l+1}$ will increase the rank of the recovery matrix, i.e., $\rank\boldsymbol{H}(t,l)<\rank\boldsymbol{H}(t,l+1)$.

\subsubsection{Comparison with Prior Work.}\label{comparison}

\begin{figure*}
	\centering
	\begin{subfigure}[b]{0.33\textwidth}
		\centering
		\includegraphics[width=\textwidth]{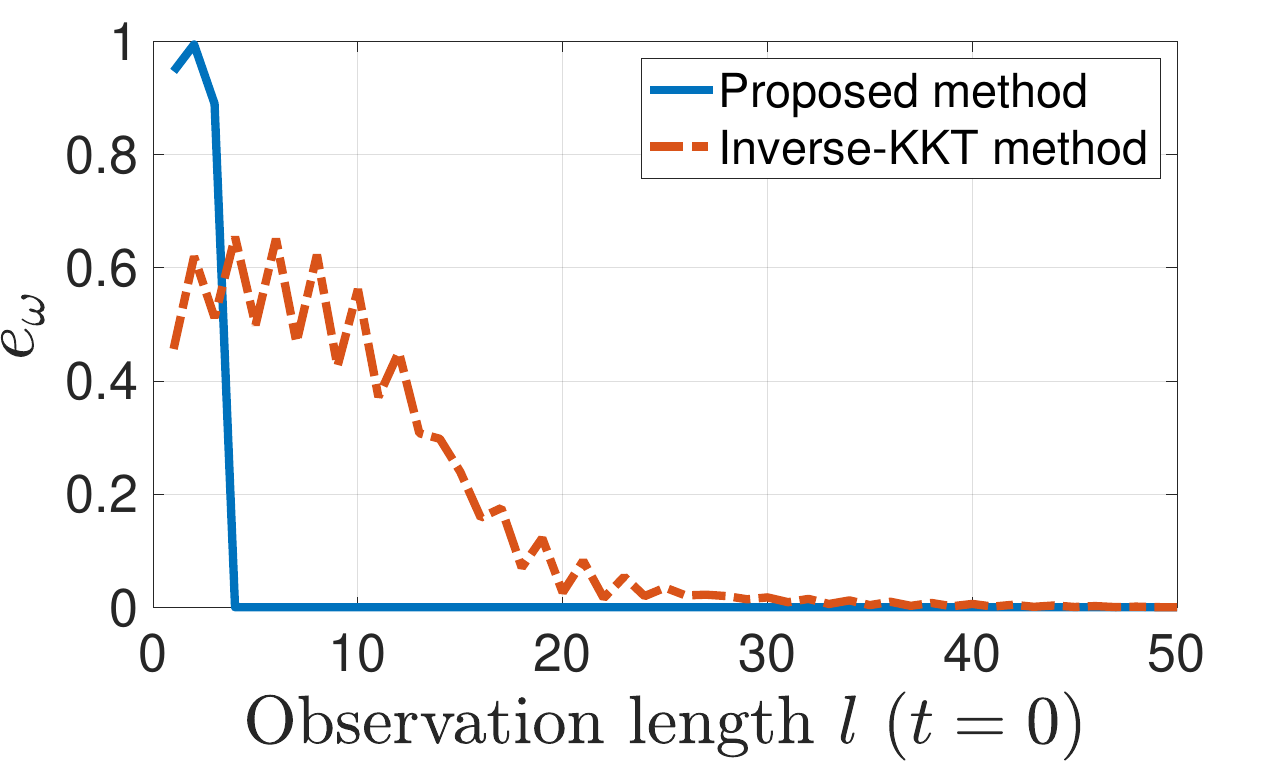}
		\caption{Observation starting from $t=1$}
		\label{comparefig.1}
	\end{subfigure}
	\hfill
	\begin{subfigure}[b]{0.33\textwidth}
		\centering
		\includegraphics[width=\textwidth]{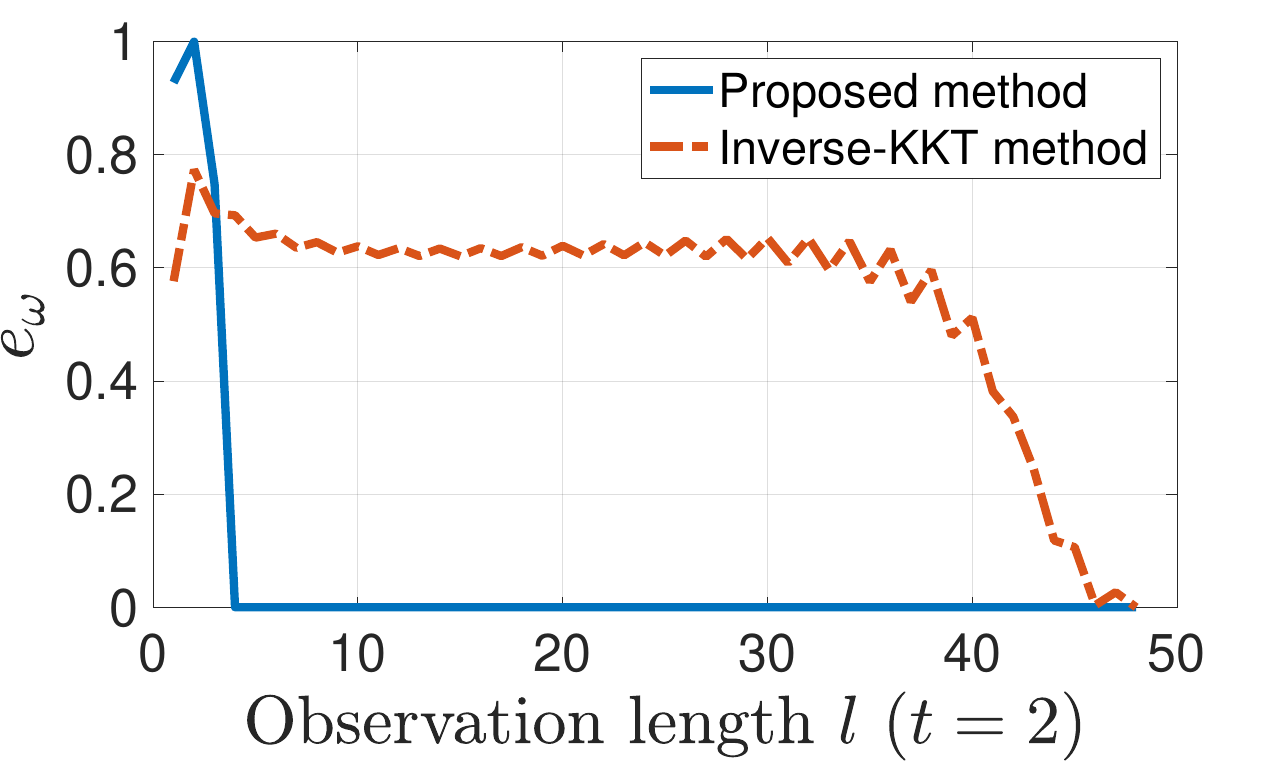}
		\caption{Observation starting from $t=2$}
		\label{comparefig.2}
	\end{subfigure}
	\hfill
	\begin{subfigure}[b]{0.33\textwidth}
		\centering
		\includegraphics[width=\textwidth]{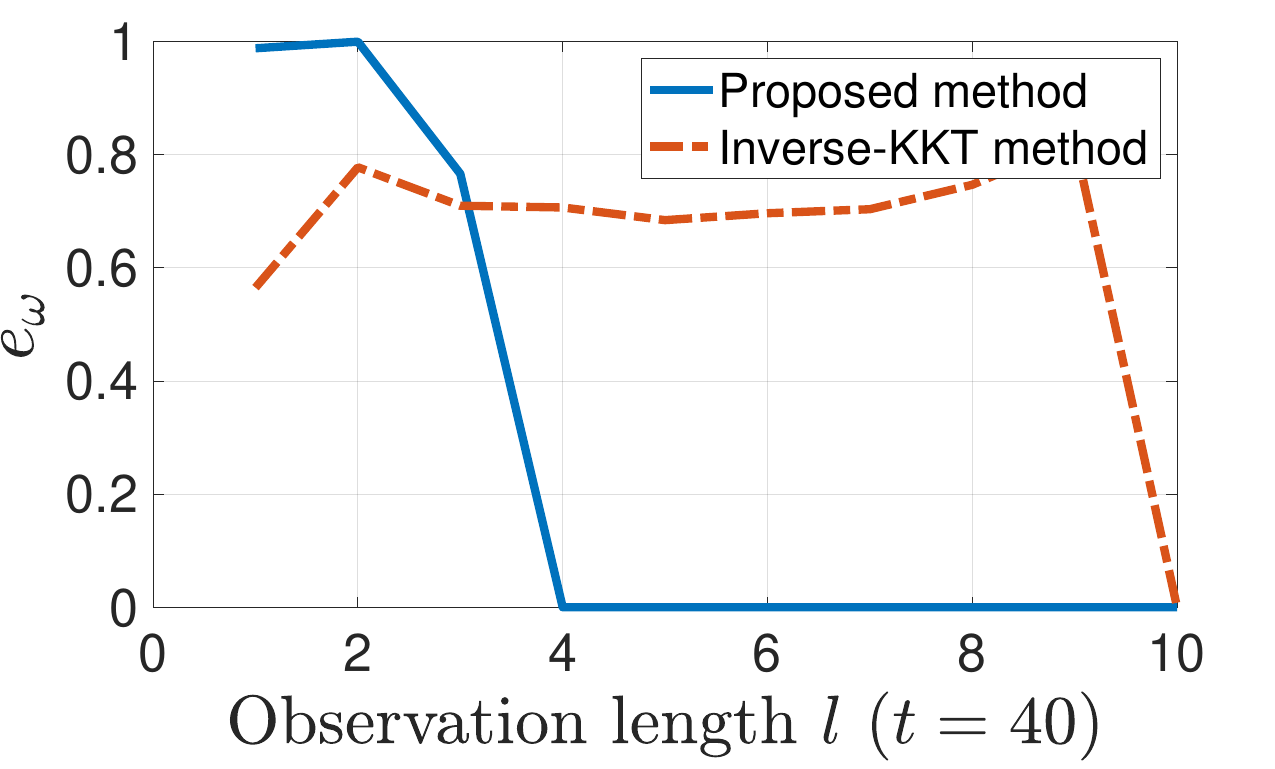}
		\caption{Observation starting from $t=40$}
		\label{comparefig.3}
	\end{subfigure}
	\caption{Comparison between the inverse-KKT method  (\ref{invKKT-incomplete}) and proposed recovery matrix method (\ref{proposed-incomplete})  when given incomplete trajectory observation  $\boldsymbol{\xi}_{t:t+l}$. Different observation starting time $t$ is used: $t=0$ in (a), $t=2$ in (b), and $t=40$ in (c). For each case, we increase the observation length $l$ from $1$ to the end of the horizon, i.e., $t+l=T$, and for each $l$, the estimation error $e_{\boldsymbol{\omega}}$ for both methods is evaluated, respectively. Note that the estimation error is defined in (\ref{estimationerror}).}
	\label{comparefig}
\end{figure*}

Here we demonstrate  how the recovery matrix is able to solve IOC problems using incomplete observations. We will show this by comparing with a recent inverse-KKT method developed in \citep{englert2017inverse}. The idea of the inverse-KKT method is based on the optimality equations similar to  (\ref{compareequ1}) using  full trajectory data  $\boldsymbol{\xi}$. 
As suggested by \citep{englert2017inverse},  the  weights are estimated by minimizing
\begin{equation}\label{compare2 }
\min_{\boldsymbol{\omega}}\norm{\boldsymbol{M}(\boldsymbol{\xi})\boldsymbol{\omega}}^2,
\end{equation}
subject to $\sum_{i}{\omega}_i=1$. Although the inverse-KKT method  \citep{englert2017inverse} is developed based on full trajectory data $\boldsymbol{\xi}$, we here want to see its performance  when only  incomplete data  $\boldsymbol{\xi}_{t:t+l} \subseteq \boldsymbol{\xi}$ is given.

As analyzed in Section \ref{relationshiptoexistingwork}, the coefficient matrix $\boldsymbol{M}(\boldsymbol{\xi})$ is a special case  of the  recovery matrix when $\boldsymbol{\xi}_{t:t+l}=\boldsymbol{\xi}_{0:T}$, that is, 
\begin{align}\label{compare1}
\boldsymbol{M}(\boldsymbol{\xi})=\boldsymbol{H}_1(0,T).
\end{align}
 {Recall that this is because  the LQR  in (\ref{lineardyn})-(\ref{lqrcost}) is a free-end optimal control system,  as analyzed in Section \ref{definitionrecoverymatrix}, $\boldsymbol{\lambda}_{T+1}=\boldsymbol{0}$. }
Given incomplete observation data $\boldsymbol{\xi}_{t:t+l} \subseteq \boldsymbol{\xi}$, comparing the inverse-KKT method
 	\begin{equation}\label{invKKT-incomplete}
 \min_{\boldsymbol{\omega}}\norm{\boldsymbol{M}(\boldsymbol{\xi}_{t:t+l})\boldsymbol{\omega}}^2 \quad \text{s.t.}\quad \sum_{i}^{}{\omega}_i=1,
 \end{equation}
with the proposed recovery matrix method
\begin{align}\label{proposed-incomplete}
\min_{\boldsymbol{\omega},\boldsymbol{\lambda}}\norm{{\boldsymbol{H}}_1(t,l)\boldsymbol\omega +{\boldsymbol{H}}_2(t,l)\boldsymbol\lambda
}^2  \quad \text{s.t.} \quad \sum_{i}{\omega}_i=1,
\end{align}
can show us how the \emph{unseen future data}  influences learning of the cost function.

For the LQR trajectory in Fig. \ref{figlqr}, we use the feature set ${\mathcal{F}}=\{x_{1}^2, x_{2}^2, u^2\}$, set the observation starting time $t$ to be 0, 2, and 40, respectively, and for each observation starting time $t$, we increase the observation length $l$ from $1$ to the end of the trajectory, i.e., $t+l=T$. With each observation $\boldsymbol{\xi}_{t:t+l}$,  we solve the weight estimate  using  the inverse-KKT method (\ref{invKKT-incomplete})  and the proposed method (\ref{proposed-incomplete}) and evaluate the estimation error $e_{\boldsymbol{\omega}}$ in (\ref{estimationerror}), respectively.
Results  are shown in Fig. \ref{comparefig}, based on which we have the following comments.

1) The inverse-KKT method is sensitive to the starting time of the observation sequence. When the observation starts from  $t=0$ (Fig. \ref{comparefig.1}), the inverse-KKT method achieves a successful estimate after  observation length $l\geq 30$; when $t=2$ (Fig. \ref{comparefig.2})  and   $t=40$ (Fig. \ref{comparefig.3})  only when  $l$  reaches  the end of  trajectory,  can the inverse-KKT method  obtain the successful estimate.

2) As we have analyzed in Section \ref{relationshiptoexistingwork}, the success of the inverse-KKT method requires that the given data $\boldsymbol{\xi}_{t:t+l}$ itself minimizes the cost function, which is only guaranteed when the observation reaches the   trajectory end, i.e., $t+l=T$. This explains the results in Fig. \ref{comparefig.2} and \ref{comparefig.3}. Given incomplete  $\boldsymbol{\xi}_{t:t+l}$ ($t+l<T$), although the inverse-KKT method still achieves a successful estimate  in Fig. \ref{comparefig.1}, such performance is not guaranteed and  heavily relies on   `informativeness' of  the given incomplete data relative to \emph{unseen future information}. In Fig.~\ref{figlqr}, since the trajectory data at  beginning phase  is more `informative' than the rest,  the inverse-KKT method starting from $t=0$ uses less data to converge (Fig. \ref{comparefig.1}) than  starting from $t=2$ (Fig. \ref{comparefig.2}).

3) In contrast, Fig. \ref{comparefig} shows  the effectiveness of using the recovery matrix to deal with incomplete observations. The proposed method  guarantees a successful estimate after  a much smaller observation length (e.g., around $l=4$ for all three cases). This advantage is because the  unseen future information is accounted for by $\boldsymbol{H}_2(t,l)$ in the recovery matrix and the related unknown future variable $\boldsymbol{\lambda}_{t+l+1}$ is jointly estimated in  (\ref{proposed-incomplete}).

In sum, we  make the following conclusions. First,  existing KKT-based methods generally require a full  trajectory, and cannot deal with   incomplete trajectory  data. Second, the proposed recovery matrix method addresses this by  jointly accounting for  \emph{unseen} future information; and 
the recovery matrix presents a systematic way to check whether a trajectory segment is sufficient to recover the objective function and if so, to solve it only using the  segment data. Third, existing KKT-based methods can be viewed as a special case of the proposed recovery matrix method when the segment data is the full trajectory.

\subsubsection{ {IOC for Infinite-horizon LQR.}} \label{IOCinfLQR}

 {We demonstrate the ability of the proposed method to solve the IOC problem for an infinite-horizon control system. We still use the LQR system in (\ref{lineardyn})-(\ref{lqrcost}) as an example, but here we set the time horizon $T=\infty$ (other conditions and parameters remain the same). The  optimal trajectory in this case is a result of feedback control $\boldsymbol{u}=K\boldsymbol{x}$ with a constant  control gain $K$ solved by  the algebraic Riccati equation \citep{bertsekas1995dynamic}. For the above infinite-horizon LQR  (with $Q$ and $R$ in (\ref{lqrcosts})), the  control gain is solved as  $K=[-0.1472\,\,  -0.1918]$. 
}

\begin{figure}[h]
	\centering
	\includegraphics[width=0.85\columnwidth]{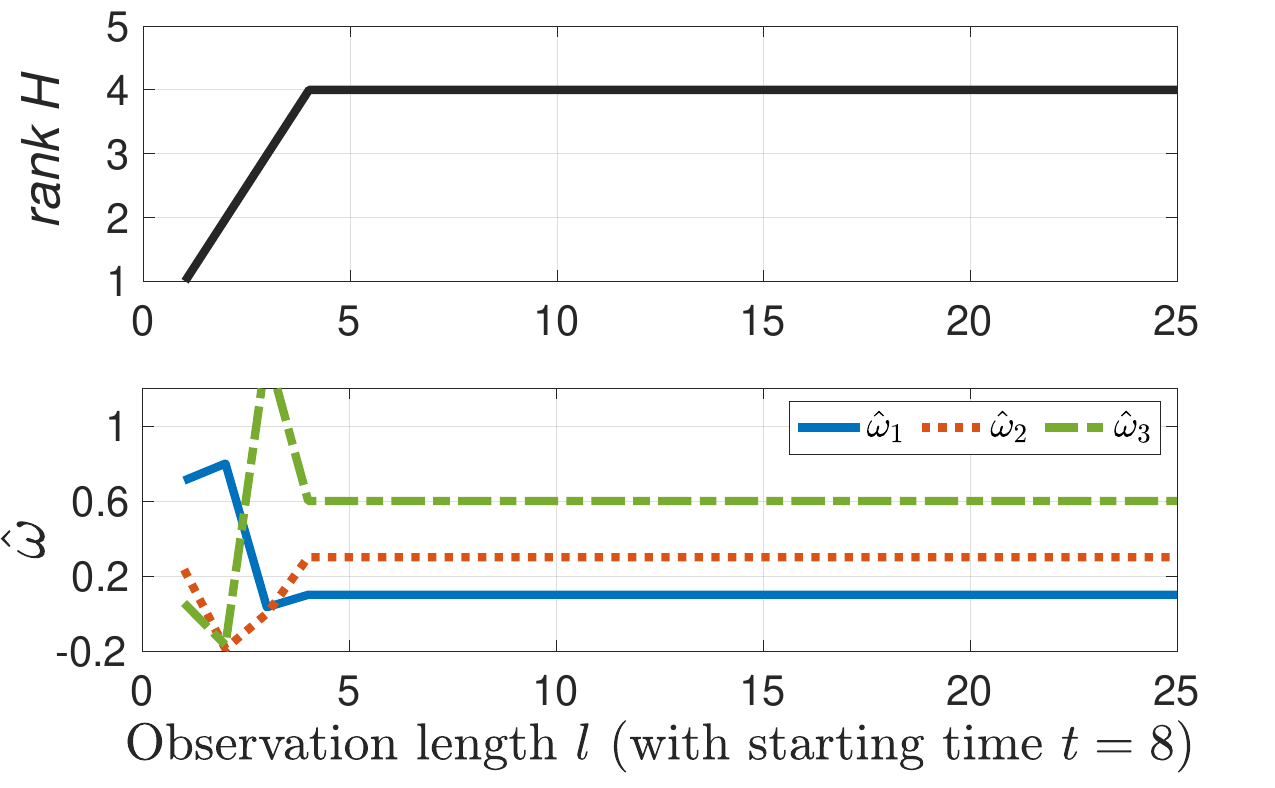}
	\caption{ {IOC results for infinite-horizon LQR system. The observation starting time is $t=8$ and the observation length $l$ increases from $l=1$ to $25$. The upper panel shows the rank of the recovery matrix versus increasing  $l$, and the bottom panel is the corresponding  weight estimate for each $l$.}}
	\label{infinite_LQR}
\end{figure}

 {In IOC, suppose that we observe an arbitrary segment from the infinite-horizon trajectory; here we use the segment data within the  time interval $[t,t+l]=[8,33]$, namely, $\boldsymbol{\xi}_{8:33}$ with $t=8$ and $l=25$. We set the candidate feature set ${\mathcal{F}}=\{x_{1}^2, x_{2}^2, u^2\}$. The IOC results using  Algorithm \ref{algorithm1} are presented in Fig. \ref{infinite_LQR}. Here we fix the observation starting time $t=8$ while increasing $l$ from 1 to the time interval end $25$. The upper panel of Fig. \ref{infinite_LQR} shows $\rank\boldsymbol{H}(8,l)$ versus  increasing observation length $l$, and the bottom panel shows the  weight estimate $\hat{\boldsymbol{\omega}}$ of each $l$ solved from the kernel of the recovery matrix $\boldsymbol{H}(8,l)$. As shown in the upper panel, with the observation length  $l$ increasing, $\rank\boldsymbol{H}(8,l)$ quickly reaches the upper bound rank $r+n-1=4$ after $l\geq 4$, indicting the successful estimate of the weights as shown in the bottom panel. The results demonstrate the ability of the proposed method to solve IOC problems for infinite-horizon optimal control systems.}

{\subsection{Evaluation on a Two-link Robot Arm}}

\begin{figure}[h]
	\centering
	\includegraphics[width=0.8\columnwidth]{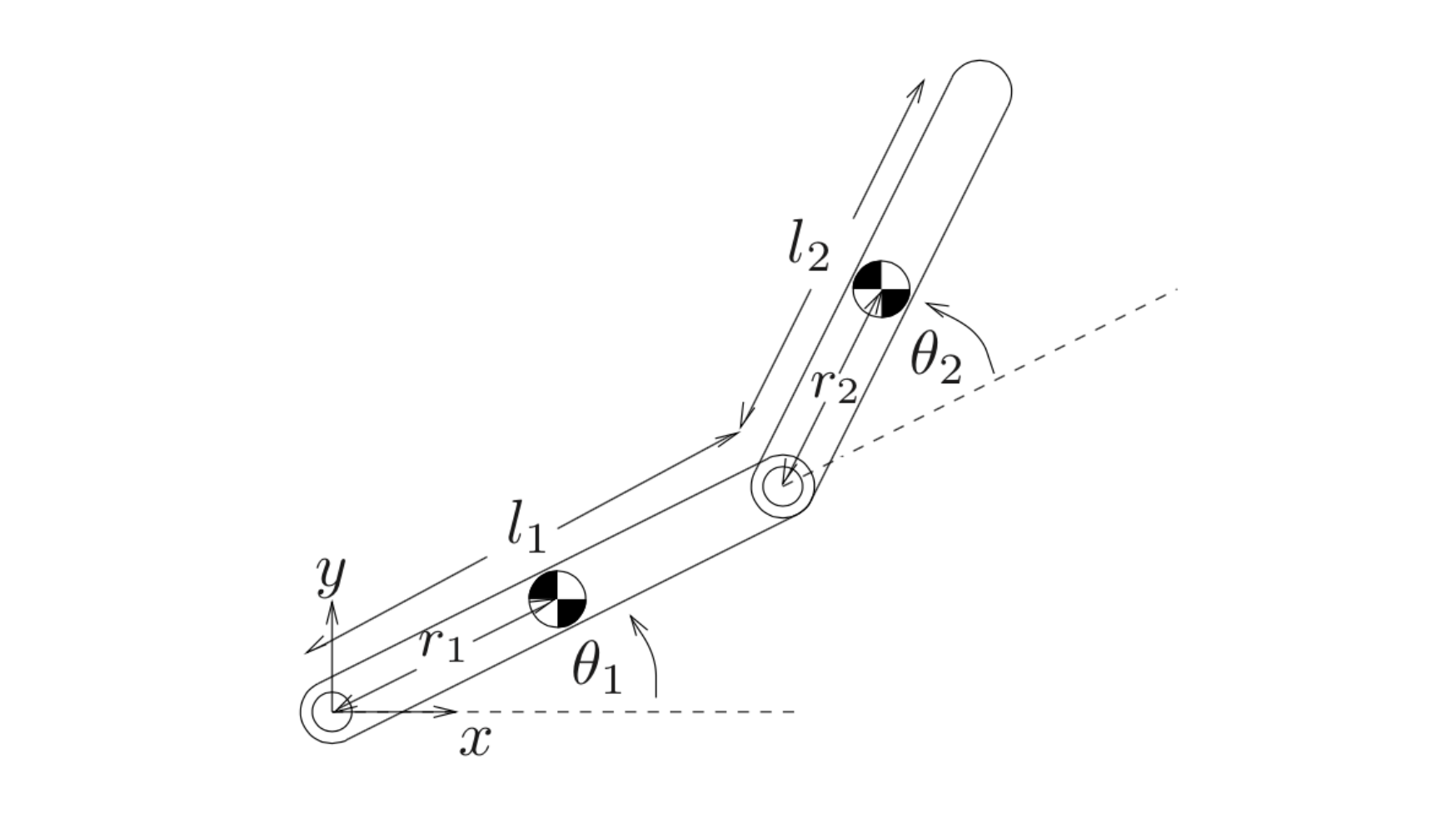}
	\caption{Two-link robot arm with coordinate definitions}
	\label{robotarm}
\end{figure}
To evaluate the proposed method on a non-linear plant, we use a two-link robot arm system, as shown in Fig. \ref{robotarm}.  {The  dynamics of the two-link arm} \cite[p. 209]{spong2008robot}  moving in the vertical plane  is
\begin{equation} \label{arminversedynamics}
M(\boldsymbol\theta)\ddot{\boldsymbol\theta}+C(\boldsymbol\theta,\dot{\boldsymbol\theta})\dot{\boldsymbol\theta}+\boldsymbol g(\boldsymbol\theta)=\boldsymbol\tau,
\end{equation}
where $\boldsymbol\theta=[\theta_1, \theta_2]^{\prime} \in \mathbb{R}^2$ is the joint angle vector; $M(\boldsymbol\theta)\in \mathbb{R}^{2\times2}$ is the  positive-definite inertia matrix; $C(\boldsymbol\theta,\dot{\boldsymbol\theta}) \in \mathbb{R}^{2\times2}$ is the Coriolis
matrix; $\boldsymbol g(\boldsymbol\theta) \in \mathbb{R}^2$ is the gravity vector; and $\boldsymbol\tau=[\tau_1, \tau_2]^{\prime}\in \mathbb{R}^2$ are the input torques applied to each joint.
The parameters  of the two-link robot arm in Fig. \ref{robotarm}  are as follows.
The mass of each link is $m_1=1 \mathrm{kg}$, $m_2=1 \mathrm{kg}$; the length of each link is $l_1=1 \mathrm{m}$, $l_2=1 \mathrm{m}$; the distance from the joint to the center of mass for each link is $r_1=0.5 \mathrm{m}$, $r_2=0.5 \mathrm{m}$; and  the moment of inertia with respect to the  center of mass for each link is $I_1=0.5 \mathrm{kgm^2}$, $I_2=0.5 \mathrm{kgm^2}$. From (\ref{arminversedynamics}), we have
\begin{equation} \label{armforwarddynamics}
\ddot{\boldsymbol\theta}=M(\boldsymbol\theta)^{-1}(-C(\boldsymbol\theta,\dot{\boldsymbol\theta})\dot{\theta}-\boldsymbol g(\boldsymbol\theta)+\boldsymbol\tau),
\end{equation}
which can be further expressed in state-space representation
\begin{equation}\label{armcontinus}
\dot{\boldsymbol{x}}=\boldsymbol f(\boldsymbol{x},\boldsymbol{u}),
\end{equation}
with the system state and  input defined as
\begin{equation}
\boldsymbol{x}=\begin{bmatrix}
\theta_1&\theta_2&\dot{\theta_1}&\dot{\theta_2}
\end{bmatrix}^{\prime},\quad  \quad \boldsymbol{u}=
\begin{bmatrix}
\tau_1&\tau_2
\end{bmatrix}^{\prime},
\end{equation}
respectively. 
  {We consider the following finite-horizon fixed-end optimal control  for the above robot arm system: }
\begin{equation}\label{armdirectproblem}
\begin{aligned}
& \underset{\boldsymbol{{x}}_{1:T}}{\text{min}}
& & \sum_{k=0}^{T} \boldsymbol \omega\boldsymbol \phi^*(\boldsymbol{x}_k,\boldsymbol{u}_k), \\
& \text{s.t.}
& & \boldsymbol x_{k+1}=\boldsymbol x_k+\Delta  \boldsymbol f(\boldsymbol x_k,\boldsymbol u_{k}),\\
&&& \boldsymbol x_0=\boldsymbol x_{\text{start}}, \\
&&& \boldsymbol x_{T+1}=\boldsymbol x_{\text{goal}},
\end{aligned}
\end{equation}
where $\Delta =0.01 \mathrm{s}$ is the discretization interval.
In (\ref{armdirectproblem}), we specify the initial state $\boldsymbol x_{\text{start}}=[0,0,0,0]^{\prime}$, goal state $\boldsymbol x_{\text{goal}}=[\frac{\pi}{2},-\frac{\pi}{2},0,0]^{\prime}$,  the time horizon $T=100$, and the feature vector and the  corresponding weights  
 {\begin{equation} \label{armfeaturesCost}
	\boldsymbol\phi^*=\begin{bmatrix}
	\tau_1^2 &\tau_2^2 & \tau_1\tau_2
	\end{bmatrix}^{\prime} \quad 
	\boldsymbol\omega=\begin{bmatrix}
	0.6 & 0.3 & 0.1 
	\end{bmatrix}^\prime,
	\end{equation}}%
respectively.
We solve the above optimal control system (\ref{armdirectproblem}) using the CasADi software \citep{Andersson2019} and plot the resulting trajectory  in Fig. \ref{generateplot}.

\begin{figure}[h]
	\centering
 	\includegraphics[width=0.9\columnwidth]{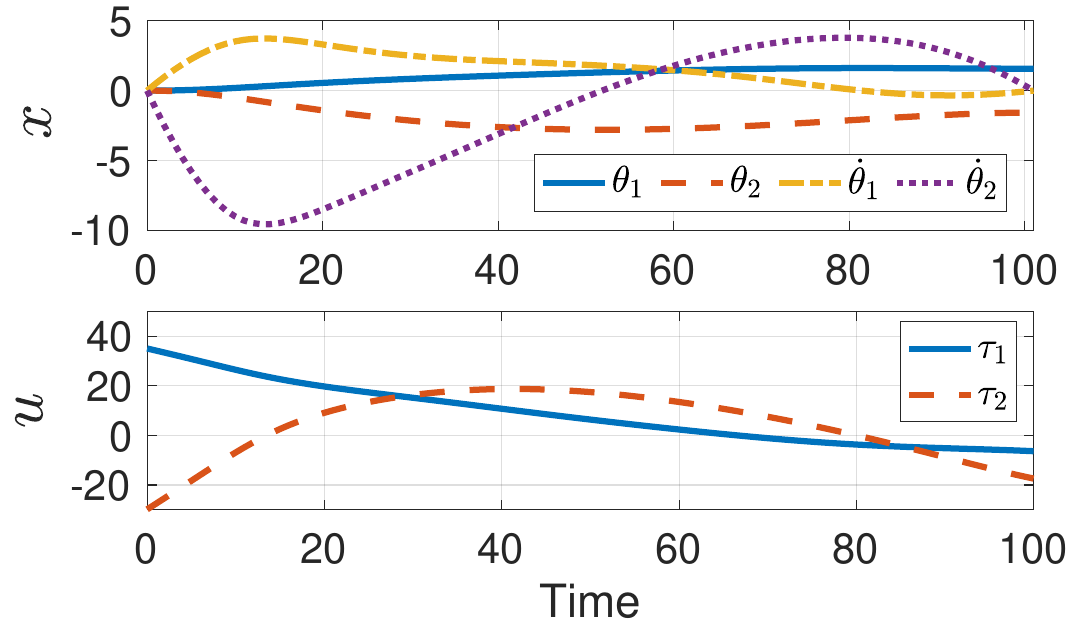}
	\caption{The optimal trajectory of the two-link  robot arm optimal control system (\ref{armdirectproblem}) with the cost function  (\ref{armfeaturesCost}).}
	\label{generateplot}
\end{figure}

\bigskip
\subsubsection{Minimal Required Observations for IOC.\label{rankpropertysimulation2}} Based on the above robot arm system, we first  show the use of the recovery matrix to check whether an incomplete trajectory observation suffices for  the minimal  observation required for  successful IOC. As an example, in Fig. \ref{generateplot}, we set the observation starting time at $t=50$. While increasing the observation length $l$ from $1$, we check the rank of  $\boldsymbol{H}(50,l)$, solve the weight estimate $\hat{\boldsymbol{\omega}}$ from the kernel of $\boldsymbol{H}(50,l)$, and evaluate the estimation error $e_{\boldsymbol{\omega}}$ in (\ref{estimationerror}) for $\hat{\boldsymbol{\omega}}$. This process is repeated for  three different candidate feature sets:  {$\mathcal{F}=\{\tau_1^2,\tau_2^2, \tau_1\tau_2\}, \mathcal{F}=\{\tau_1^2,\tau_2^2, \tau_1\tau_2, \tau_1^3,\tau_2^3,  \tau_1^2\tau_2\},$ and $ \mathcal{F}=\{\tau_1^2,\tau_2^2, \tau_1\tau_2, \tau_1^3,\tau_2^3,  \tau_1\tau_2^2, \tau_1^4,\tau_2^4, \tau_1^3\tau_2, \tau_1^2\tau_2^2  \}$}, respectively. The results are shown in Fig.~\ref{robotrank}.

\begin{figure}[h]
	\includegraphics[width=0.9\columnwidth]{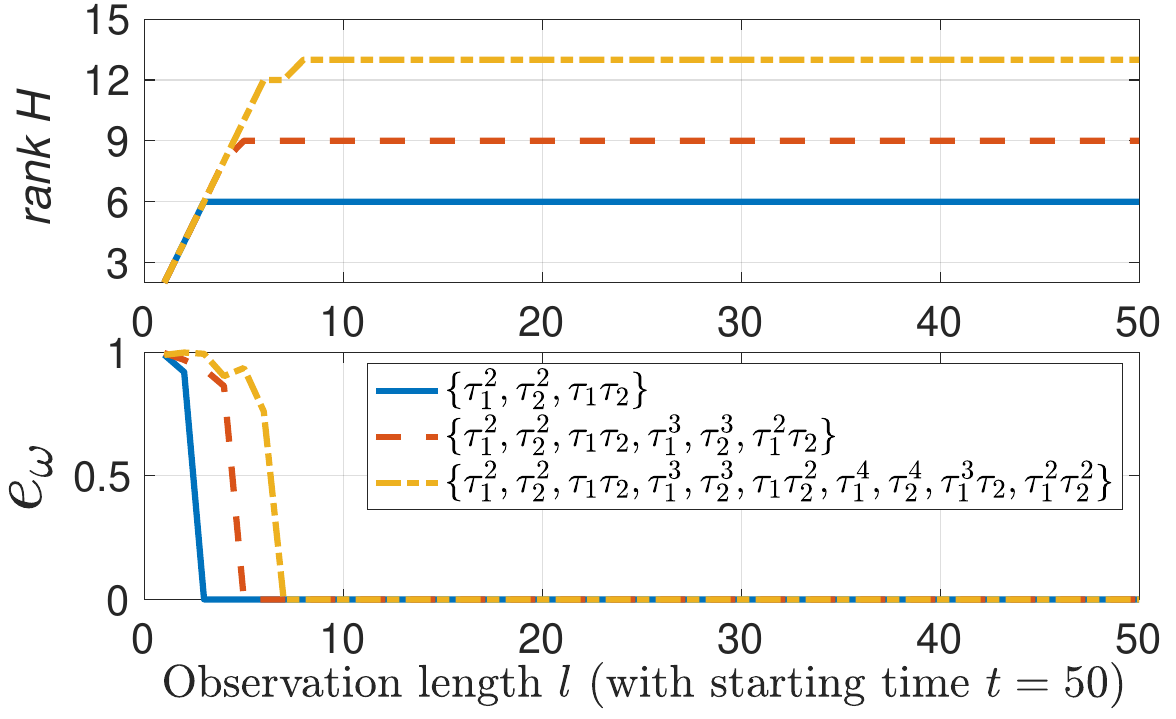}
	\caption{The rank of the recovery matrix and corresponding estimation error $e_{\boldsymbol{\omega}}$ under different observation length $l$ and different candidate feature sets. The upper panel shows the rank of the recovery matrix versus $l$, and the bottom panel shows the  estimation error $e_{\boldsymbol{\omega}}$ for each $l$. Three   feature sets,  {$\mathcal{F}=\{\tau_1^2,\tau_2^2, \tau_1\tau_2\}, \mathcal{F}=\{\tau_1^2,\tau_2^2, \tau_1\tau_2, \tau_1^3,\tau_2^3,  \tau_1^2\tau_2\},$ and $ \mathcal{F}=\{\tau_1^2,\tau_2^2, \tau_1\tau_2, \tau_1^3,\tau_2^3,  \tau_1\tau_2^2, \tau_1^4,\tau_2^4, \tau_1^3\tau_2, \tau_1^2\tau_2^2  \}$}, are used, respectively, and the corresponding results are plotted in different  lines.  {Note that when $\rank\boldsymbol{H}(50,l)<|\mathcal{F}|+n-1$, since the dimension of the kernel of $\boldsymbol{H}(50,l)$ is at least 2,  we thus choose  $\hat{\boldsymbol\omega}$ from the kernel of $\boldsymbol{H}(50,l)$ randomly}.}
	\label{robotrank}
\end{figure}

Results in the upper panel of Fig. \ref{robotrank} show that additional observations   increase the rank of the recovery matrix. Once the additional observations lead to the upper-bound rank of the recovery matrix, i.e., $|\mathcal{F}|+n-1$, the corresponding length is  the minimal observation length $l_{\min}(t)$ required for a successful weight estimate, as  shown in the corresponding bottom panel in Fig. \ref{robotrank}. Moreover,  including additional irrelevant features in $\mathcal{F}$ will lead to the increased  minimal required observation length $l_{\min}(t)$, as implied  by (\ref{unifomlb}). This will be  discussed later in Section \ref{irrelevantfeatures}.

\subsubsection{Observation Noise.\label{observationnoise}}  We   test the proposed incremental IOC approach (Algorithm \ref{algorithm1}) under different data noise levels. We add  to the trajectory  in Fig.~\ref{generateplot}  Gaussian noise of different levels that are characterized by different standard deviations from $\sigma=10^{-5}$ to $\sigma=10^{-1}$. In Algorithm \ref{algorithm1}, we use  {$\mathcal{F}=\{{\tau}^2_1,{\tau}^2_2, \tau_1\tau_2\}$} and set $\gamma=45$ (the choice of $\gamma$ will be discussed later in Section \ref{parametersetting}).

We set  the observation starting time $t$  at all  time instants except for those near the trajectory end which can not provide sufficient subsequent observation length. As an example, we present the experimental results for the case of noise level $\sigma=10^{-2}$ in Fig \ref{robotnoise}. Here, the upper panel shows  the minimal required observation length $l_{\min}(t)$ automatically found for each observation starting time $t$, and  the bottom shows the corresponding weight estimate using the minimal required observation data $\boldsymbol{\xi}_{t:t+l_{\min}(t)}$.

\begin{figure}[h]
	\centering
	\includegraphics[width=0.85\columnwidth]{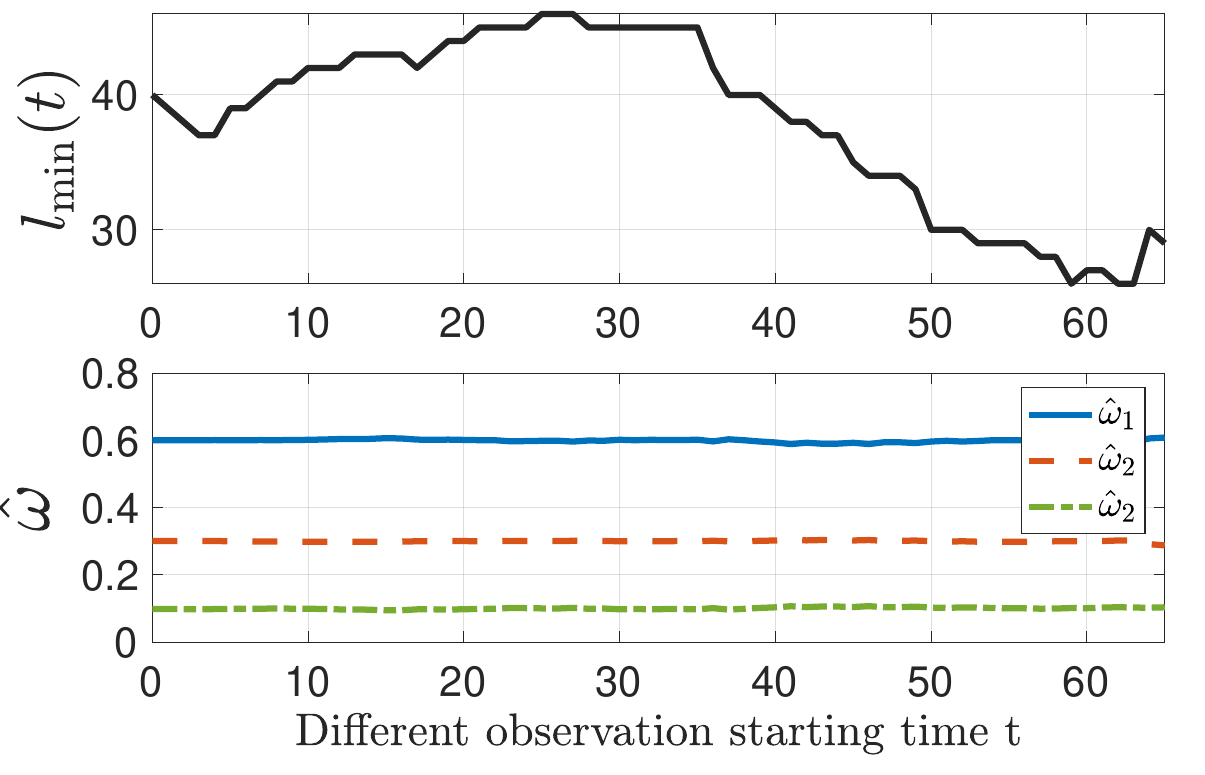}
	\caption{IOC by automatically finding the minimal required observation  under noise level $\sigma=10^{-2}.$ The x-axis is the different observation starting time $t$. The upper panel shows the automatically-found minimal  required observation length $l_{\min}(t)$ at different $t$, and the bottom panel shows the corresponding estimate $\boldsymbol{\hat\omega}$  via (\ref{compute_aug_cost}). Note that ground truth $\boldsymbol{\omega}=[0.6,0.3, 0.1]^\prime$.}
	\label{robotnoise}
\end{figure}

From Fig. \ref{robotnoise}, we   see that the automatically-found minimal required observation length $l_{\min}(t)$ varies depending on the observation starting time $t$. This can be interpreted by noting that the trajectory  data in Fig. \ref{generateplot} in different intervals  has different informativeness to reflect the cost function. For example, according to Fig.~\ref{robotnoise}, we can postulate that the beginning and final portions of the  trajectory data are  more `data-informative' than other portions,  thus needing smaller  $l_{\min}(t)$  to achieve the successful estimate. This can be understood if we consider  that  the beginning and final portions of the trajectory  in Fig. \ref{generateplot} has richer  patterns such as curvatures   than the middle which are more smooth. Using the recovery matrix, the data informativeness about the cost function is quantitatively indicated by the recovery matrix's rank.	Even under  observation noise, the proposed method can  adaptively find the sufficient observation length size such that the data is informative enough to guarantee  a successful estimate of the  weights, as shown by both upper and bottom panels in Fig. \ref{robotnoise}.

\begin{table}[h]
	\small\sf\centering
	\begin{threeparttable}
		\caption{Results of incremental IOC (Algorithm \ref{algorithm1}, $\gamma=45$) under different noise levels.\label{talbe1}}
		\begin{tabular}{lll}
			\toprule
			Noise level $\sigma$ & Averaged $l_{\min}/T$ (\%) \tnote{\textdagger} &  Averaged  $e_{\boldsymbol{\omega}}$\tnote{\textdagger}  \\
			\midrule
			
			$\sigma=10^{-5}$   &$8\% $  &   $4.3\times 10^{-4}$  \\
			$\sigma=10^{-4}$   & $8.1\%$ &   $4.0\times10^{-3}$ \\
			$\sigma=10^{-3}$  &$12.61\%$ &    $8.1\times10^{-3}$ \\
			$\sigma=10^{-2}$  &$33.8\%$&   $8.5\times10^{-3}$ \\
			$\sigma=10^{-1}$  &$70.0$\%&   $7.1\times10^{-3}$ \\
			\bottomrule
		\end{tabular}
		\begin{tablenotes}
			\small
			\item[\textdagger] The average is calculated based on all successful estimations over all  observation cases (varying observation starting time). 
		\end{tablenotes}
	\end{threeparttable}
\end{table}

We summarize all results under different  noise levels  in Table \ref{talbe1}. Here the minimal required observation length is presented in percentage  with respect to the total  horizon $T$.
According to Table \ref{talbe1}, under a fixed rank index threshold  (here $\gamma=45$), we can see that high noise levels, on average, will lead to larger  minimal required observation length, but the estimation error is not influenced too much. This is because the increased observation length can compensate for the uncertainty induced by data noise and finally produces a `neutralized' estimate. Hence, the results  prove the  robustness of the proposed incremental IOC algorithm against the small observation  noise. We will later show how to further improve the accuracy by adjusting $\gamma$.

\subsubsection{Presence of Irrelevant Features.\label{irrelevantfeatures}} We here assume that exact knowledge of relevant features is not available, and we evaluate the  performance of Algorithm \ref{algorithm1} given a feature set including irrelevant features. We add all observation data with   Gaussian noise of $\sigma=10^{-3}$. 
In Algorithm \ref{algorithm1}, we set $\gamma=45$ and construct a  feature set $\mathcal{F}$ based on the following     candidate features
 {\begin{multline} \label{candidatefeatures}
	\{\tau_1^2,\,\tau_2^2, \,\tau_1\tau_2,\,\,\, \tau_1^3,\,\tau_2^3, \, \tau_1\tau_2^2,\, \tau_1^2\tau_2, \,\\
	\tau_1^4,\tau_2^4,\,\, \tau_1^3\tau_2, \,\,\tau_1\tau_2^3,\, \,\tau_1^2\tau_2^2  \}.
	\end{multline}}\noindent
Algorithm \ref{algorithm1} is applied the same way as in the previous experiment: by starting the observation at all time instants except for those near the trajectory end.  { We provide different candidate  feature sets in the first column in Table~\ref{talbe2}},  and for each case we compute the average of the minimal required observation length and the average of estimation error in (\ref{estimationerror}). The results are  summarized in second and third columns in Table \ref{talbe2}.

\begin{table}[h]
	\small\sf\centering
	\renewcommand{\arraystretch}{1.3}
	\begin{threeparttable}
		\caption{Results of incremental IOC (Algorithm \ref{algorithm1}, $\gamma=45$) with different given  feature sets.\label{talbe2}}
		\begin{tabular}{lll}
			\toprule
			Candidate feature set $\mathcal{F}$ & Averaged $l_{\min}/T$\tnote{1}  &  Averaged $e_{\boldsymbol{\omega}}$\tnote{1}  \\
			\midrule
			$\{\tau_1^2,\,\tau_2^2, \,\tau_1\tau_2\}$    &$12.18\%$&   $4.2\times 10 ^{-3}$  \\[0.8ex] 
			$\begin{aligned}
			\{\tau_1^2,\,\tau_2^2, \,\tau_1\tau_2, \, \tau_1^3, \tau_2^3\}
			\end{aligned}$   & $14.7\%$ &   $9.7\times 10 ^{-3}$  \\[0.8ex] 
			$\begin{aligned}
			\{&\tau_1^2,\,\tau_2^2, \,\tau_1\tau_2, \, \tau_1^3, \tau_2^3, \\[-3pt] & \tau_1\tau_2^2,\, \tau_1^2\tau_2\}
			\end{aligned}$  &$25.69\%$ &  $8.7\times 10 ^{-3}$ \\[2.2ex] 
			$\begin{aligned}
			\{&\tau_1^2,\,\tau_2^2, \,\tau_1\tau_2, \,\tau_1^3, \tau_2^3,  \\[-3pt] & \tau_1\tau_2^2,\, \tau_1^2\tau_2, \tau_1^4,\tau_2^4\}
			\end{aligned}$  &$35.97\%$ &   $8.6\times 10 ^{-3}$\\[2.2ex] 
			$\begin{aligned}
			\{&\tau_1^2,\,\tau_2^2, \,\tau_1\tau_2, \,\tau_1^3, \tau_2^3,\\[-3pt] &   \tau_1\tau_2^2,\, \tau_1^2\tau_2,  \tau_1^4,\tau_2^4, \\[-3pt] &  \tau_1^3\tau_2, \,\tau_1\tau_2^3, \,\tau_1^2\tau_2^2 \}
			\end{aligned}$  &$45.53\%$&   $9.1\times 10 ^{-3}$ \\
			\bottomrule
		\end{tabular}
		\begin{tablenotes}
			\small
			\item[1] The average is calculated based on all successful estimations over all  observation cases (varying observation starting time). 
		\end{tablenotes}
	\end{threeparttable}
\end{table}

Table \ref{talbe2} indicates that on average,  the minimal required observation length increases as additional irrelevant features are included  to the feature set $\mathcal{F}$. This can be understood if we consider (\ref{minobservations}) and the rank non-decreasing property in Lemma \ref{lemma2}:  when a certain number of irrelevant features are added, the rank required for successful estimate will increase by the same amount,  thus needing additional trajectory data points. Due to  increased observation length,  the estimation  accuracy is not much   influenced by the  additional irrelevant features. Thus we conclude that the proposed incremental IOC algorithm applies to the presence of irrelevant features.

\subsubsection{Parameter Setting.} \label{parametersetting}
We now discuss how to choose the rank  threshold $\gamma$ in Algorithm \ref{algorithm1}. Since in Algorithm \ref{algorithm1} the rank index  (\ref{rankindex}) for the recovery matrix is used to find the minimal required observation length, we first investigate how 
the rank index $\kappa(t,l)$  changes as the observation length $l$ increases. We  use the trajectory data in Fig. \ref{generateplot}  with added Gaussian noise of $\sigma=10^{-3}$, $\sigma=2\times10^{-3}$, and $\sigma=10^{-2}$, respectively. The candidate features set here is  {$\mathcal{F}=\{\tau_1^2,\tau_2^2, \tau_1\tau_2\}$}. We fix the observation start time $t=0$ and increase the observation length $l$ from $1$ to $T$. The rank index $\kappa(t=0,l)$ for different $l$ is shown in Fig. \ref{robotkappa}.

\begin{figure}[h]
	\includegraphics[width=0.9\columnwidth]{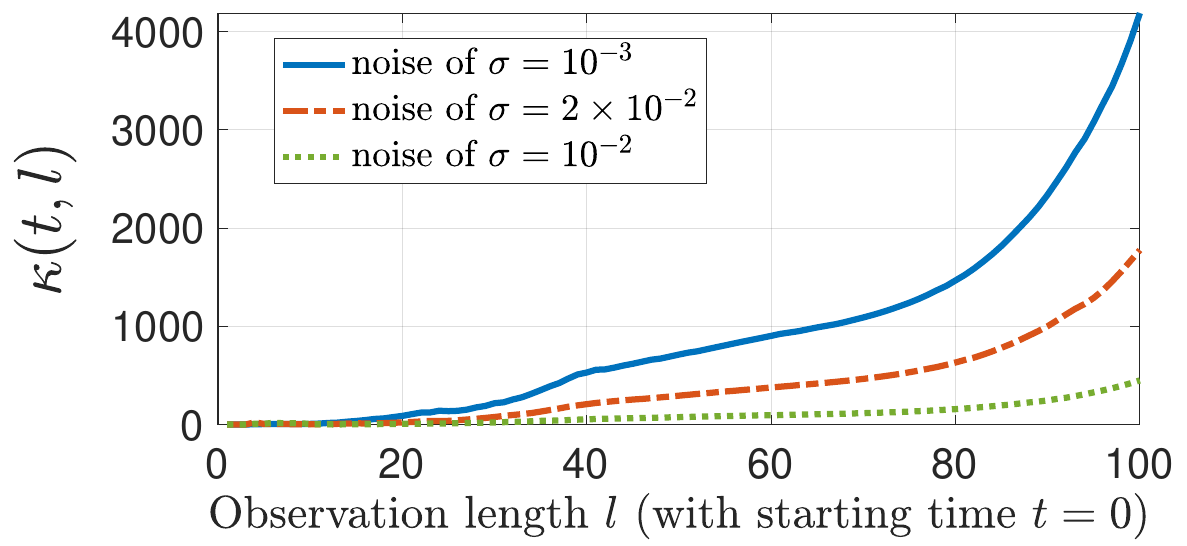}
	\caption{The rank index $\kappa(0,l)$ in (\ref{rankindex}) versus different observation length $l$  under different  noise levels.}
	\label{robotkappa}
\end{figure}

From Fig. \ref{robotkappa}, we can see that although  $\kappa(t,l)$ has different scales at different  noise levels, it in general increases as the observation length $l$ increases. This can be understood  if we compare the above results to $\kappa(t,l)$ in  noise-free cases: when there is no data noise,  according to Lemma \ref{lemma2} and \ref{lemma3}, as $l$ increases, $\kappa(t,l)$ will first remain  zero when $l<l_{\min}$, then  increase to infinity after $l\geq l_{\min}$. In noisy settings, $\kappa(t,l)$ however will increase to a large finite value. 
From the plot, we can postulate that in practice choosing a larger threshold $\gamma$ will lead to a larger minimal required  observation length $l_{\min}$, thus more  data points will be included into the recovery matrix to compute the estimate of the weights, which may finally improve the estimation accuracy (similar to results in Table~\ref{talbe1}). In what follows, we will verify this postulation by showing how $\gamma$ affects the performance of Algorithm \ref{algorithm1}.

\begin{figure}[h]
	\centering
	\includegraphics[width=0.88\columnwidth]{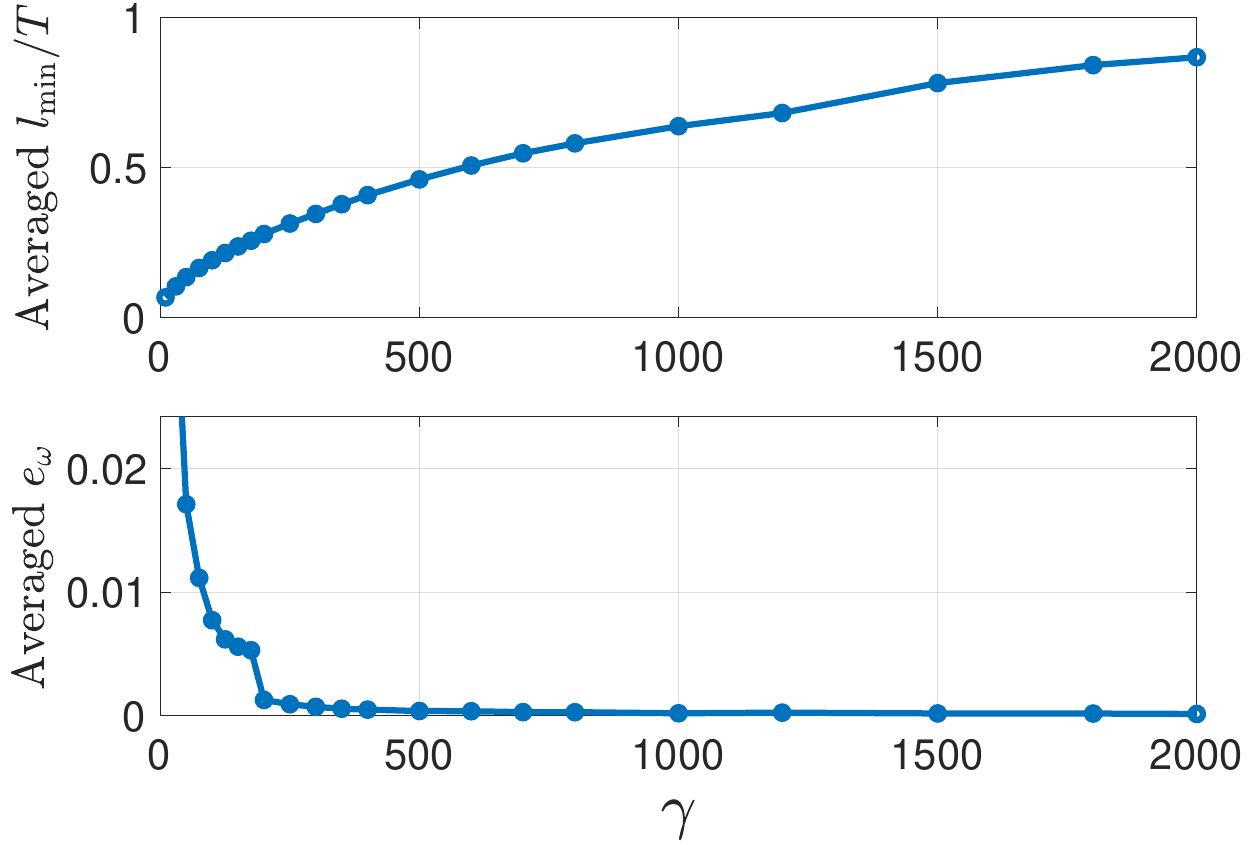}
	\caption{Averaged $l_{\min}$ (upper panel) and averaged estimation error $e_{\boldsymbol{\omega}}$ (bottom panel) for different choices of $\gamma$.}
	\label{parameters}
\end{figure}

We add Gaussian noise $\sigma=10^{-3}$ to  the trajectory data in Fig. \ref{generateplot}, and apply Algorithm \ref{algorithm1} by starting  the observation at all possible time steps, as performed in  previous experiments. We vary  $\gamma$ to show its influence on the average of the minimal required observation length $l_{\min}$ and the average of the estimation error $e_{\boldsymbol{\omega}}$. The results are shown in Fig.~\ref{parameters}, from which
we  can observe that first, a larger $\gamma$ will lead to  larger minimal required observation length; and second, due to the increased minimal required  observation, the corresponding estimation accuracy is improved because data noise or other error sources can be compensated by  additional observation data. These facts thus prove our previous postulation based on Fig. \ref{robotkappa}. Moreover, Fig. \ref{parameters} also shows  as $\gamma$ exceeds a certain value, e.g., 200, continuously increasing $\gamma$ will not improve the recovery accuracy significantly. This suggests that the choice of $\gamma$ is not sensitive to the performance if  $\gamma$  is large. Therefore, in practice it is possible to find a proper $\gamma$ without much manual effort such that both the estimation accuracy and computational cost are balanced.

\section{Conclusions}
This article considers the problem of learning an objective function  from an observation of an incomplete  trajectory.
To achieve this goal, we develop the recovery matrix, which establishes a relationship between  trajectory segment data and the unknown   weights of given candidate features. The rank of the recovery matrix indicates whether an incomplete trajectory observation is sufficient  for obtaining a successful estimate of the weights.
By investigating the properties of the recovery matrix, we further demonstrate that  additional observations may increase the rank of the recovery matrix, thus contributing to enabling the successful estimation, and that  the IOC  can be processed incrementally. Based on the recovery matrix, a method for using incomplete trajectory observations   to estimate the weights of specified features is established,   and an incremental IOC algorithm  is developed by automatically finding the minimal required observation.

\begin{acks}
	This research is partly supported by the ERC~Consolidator~Grant  \emph {Safe data-driven control for human-centric systems}  under grant agreement 864686 at Chair of Information-oriented Control, Technical University of Munich.
\end{acks}

\bibliographystyle{SageH}
\bibliography{ijrrbib}

\appendix
\section{Proof of Lemma \ref{lemma1}} \label{prooflemma1}
Consider the recovery matrix $\boldsymbol{H}(t,l)$ for the trajectory segment  $\boldsymbol{\xi}_{ {t:t+l}}=(\boldsymbol{x}^*_{ {t:t+l}} \boldsymbol{u}^*_{ {t:t+l}})$, with $t$ being the observation starting time  and $l$ being the observation length. When a subsequent point   $\boldsymbol{\xi}_{t+l+1}=(\boldsymbol{x}^*_{ {t:t+l+1}}, \boldsymbol{u}^*_{ {t:t+l+1}})$ is observed, from Definition \ref{def_rm}, the updated recovery matrix is $\boldsymbol{H}(t,l+1)=[\boldsymbol{H}_1(t,l+1),\boldsymbol{H}_2(t,l+1)]$, where
\begin{multline} \label{H_new1}
\boldsymbol{H}_1(t,l+1)=
\boldsymbol{F}_u(t,l+1)\boldsymbol{F}^{-1}_x(t,l+1)\boldsymbol{\Phi}_x(t,l+1)\\+\boldsymbol{\Phi}_u(t,l+1),
\end{multline} and
\begin{equation}\label{H_new2}
\boldsymbol{H}_2(t,l+1)=
\boldsymbol{F}_u(t,l+1)\boldsymbol{F}_x^{-1}(t,l+1)\boldsymbol{V}(t,l+1).
\end{equation}
Here $\boldsymbol{F}_x(t,l+1)$, $\boldsymbol{F}_u(t,l+1)$, $\boldsymbol{\Phi}_x(t,l+1)$, $\boldsymbol{\Phi}_u(t,l+1)$, and $\boldsymbol{V}(t,l+1)$, defined in (\ref{HFx})-(\ref{Hv}), are updated as follow:
\begin{subequations}\label{updates}
	\begin{align}
	\boldsymbol{\Phi}_{{u}}(t,l+1) &=
	\begin{bmatrix}
	\boldsymbol{\Phi}_u(t,l) \\
	\frac{\partial \boldsymbol\phi^{\prime}}{\partial \boldsymbol{u}^*_{t+l}}
	\end{bmatrix}
	\label{phiupdate},\\
	\boldsymbol{\Phi}_x(t,l+1) &=
	\begin{bmatrix}
	\boldsymbol{\Phi}_x(t,l) \\
	\frac{\partial \boldsymbol\phi^{\prime}}{\partial \boldsymbol{x}^*_{ {t:t+l+1}}}
	\end{bmatrix}
	\label{phixpdate}, \\
	\boldsymbol{F}_u(t,l+1)&=
	\begin{bmatrix}
	\boldsymbol{F}_u(t,l)& \boldsymbol{0} \\
	\boldsymbol{0}& \frac{\partial \boldsymbol{f}^{\prime}}{\partial \boldsymbol{u}^*_{t+l}}
	\end{bmatrix}
	\label{fuupdate}, \\
	\boldsymbol{F}_x^{-1}(t,l+1) &=
	\begin{bmatrix}
	\boldsymbol{F}_x(t,l)& -\boldsymbol{V}(t,l) \\
	\boldsymbol{0}& \boldsymbol{I}
	\end{bmatrix}^{-1} \nonumber \\
	&=\begin{bmatrix}
	\boldsymbol{F}_x^{-1}(t,l) & \boldsymbol{F}_x^{-1}(t,l)\boldsymbol{V}(t,l)\\
	\boldsymbol{0} & \boldsymbol{I}
	\end{bmatrix},
	\label{fxupdate}
	\end{align}
\end{subequations}
respectively.
Here (\ref{fxupdate}) is  based on the fact
\begin{equation*}
\begin{bmatrix}
A&B\\
C&D
\end{bmatrix}^{-1}=\begin{bmatrix}
A^{-1}+A^{-1}BK^{-1}CA^{-1}&-A^{-1}BK^{-1}\\
-K^{-1}CA^{-1}&K^{-1}
\end{bmatrix}
\end{equation*} with $K=D-CA^{-1}B$ being the Schur complement of the above block matrix with respect to $A$.
Combining (\ref{phiupdate})-(\ref{fxupdate}), we have
\begin{align} \label{H_new1_temp}
\begin{split}
\boldsymbol{H}_1(t,l+1)&=\boldsymbol{F}_u(t,l+1)\boldsymbol{F}_x^{-1}(t,l+1)
\boldsymbol{\Phi}_x(t,l+1)\\&\qquad+\boldsymbol{\Phi}_u(t,l+1)
\end{split} \nonumber\\
\begin{split}
&=\begin{bmatrix}
\boldsymbol{F}_u(t,l)\boldsymbol{F}_x^{-1}(t,l)\boldsymbol{\Phi}_x(t,l)+\boldsymbol{\Phi}_u(t,l)\\
\frac{\partial \boldsymbol{f}^{\prime}}{\partial \boldsymbol{u}^*_{t+l}}\frac{\partial \boldsymbol\phi^{\prime}}{\partial \boldsymbol{x}^*_{ {t+l+1}}}+\frac{\partial \boldsymbol\phi^{\prime}}{\partial \boldsymbol{u}^*_{t+l}}
\end{bmatrix}\\
&+
\begin{bmatrix}
\boldsymbol{F}_u(t,l)\boldsymbol{F}_x^{-1}(t,l)\boldsymbol{V}(t,l)\frac{\partial \boldsymbol\phi^{\prime}}{\partial \boldsymbol{x}^*_{ {t+l+1}}} \\
\boldsymbol{0}
\end{bmatrix}.
\end{split}
\end{align}
Combining with (\ref{H1})-(\ref{H2}), the above (\ref{H_new1_temp})  becomes
\begin{align}\label{updateH1}
\boldsymbol{H}_{1}(t,l+1)&=
\begin{bmatrix}
\boldsymbol{H}_{1}(t,l)+\boldsymbol{H}_{2}(t,l)\frac{\partial \boldsymbol\phi^{\prime}}{\partial \boldsymbol{x}^*_{ {t+l+1}}} \\
\frac{\partial \boldsymbol{f}^{\prime}}{\partial \boldsymbol{u}^*_{t+l}}\frac{\partial \boldsymbol\phi^{\prime}}{\partial \boldsymbol{x}^*_{ {t+l+1}}}+\frac{\partial \boldsymbol\phi^{\prime}}{\partial \boldsymbol{u}^*_{t+l}}
\end{bmatrix}. 
\end{align}
Considering (\ref{phiupdate})-(\ref{fxupdate}),  we have
\begin{align}\label{updateH2}
\boldsymbol{H}_2(t,l+1)&=
\boldsymbol{F}_u(t,l+1)\boldsymbol{F}_x^{-1}(t,l+1)\boldsymbol{V}(t,l+1) \nonumber\\
&=\begin{bmatrix}
\boldsymbol{F}_u(t,l)\boldsymbol{F}_x^{-1}(t,l)\boldsymbol{V}(t,l)\frac{\partial \boldsymbol{f}^{\prime}}{\partial \boldsymbol{x}^*_{ {t+l+1}}} \\
\frac{\partial \boldsymbol{f}^{\prime}}{\partial \boldsymbol{u}^*_{t+l}}\frac{\partial \boldsymbol{f}^{\prime}}{\partial \boldsymbol{x}^*_{ {t+l+1}}}
\end{bmatrix} \nonumber\\
&=\begin{bmatrix}
\boldsymbol{H}_2(t,l)\frac{\partial \boldsymbol{f}^{\prime}}{\partial \boldsymbol{x}^*_{ {t+l+1}}} \\
\frac{\partial \boldsymbol{f}^{\prime}}{\partial \boldsymbol{u}^*_{t+l}}\frac{\partial \boldsymbol{f}^{\prime}}{\partial \boldsymbol{x}^*_{ {t+l+1}}}
\end{bmatrix}.
\end{align}
Finally joining (\ref{updateH1}) and (\ref{updateH2}) and writing them in the matrix form lead to (\ref{iterH1}).

When $l=1$, that is,   $\boldsymbol{\xi}_{t:t+1}=(\boldsymbol{x}^*_{t:t+1},\boldsymbol{u}^*_{t:t+1})$ is available, we have $\boldsymbol{F}_x(t,1)=I$, $\boldsymbol{F}_u(t,1)=\frac{\partial \boldsymbol{f}^{\prime}}{\partial \boldsymbol{u}^*_{t}}$, $\boldsymbol{\Phi}_x(t,1)=\frac{\partial \boldsymbol\phi^{\prime}}{\partial \boldsymbol{x}^*_{t+1}}$, $\boldsymbol{\Phi}_u(t,1)=\frac{\partial \boldsymbol\phi^{\prime}}{\partial \boldsymbol{u}^*_{t}}$, and  $\boldsymbol{V}(t,1)=\frac{\partial \boldsymbol{f}^{\prime}}{\partial \boldsymbol{x}^*_{t+1}}$. 
According to the definition of recovery matrix in (\ref{H})-(\ref{H2}), we thus obtain (\ref{iterH0}).  This completes the proof. $\qed$

\section{Proof of Lemma \ref{lemma2}} \label{prooflemma2}
From Lemma \ref{lemma1}, we have
\begin{align}
&\rank \boldsymbol{H}(t,l+1) \nonumber \\
&=\rank \begin{bmatrix}
\boldsymbol{H}_{1}(t,l) &\boldsymbol{H}_{2}(t,l) \\
\frac{\partial \boldsymbol\phi^{\prime}}{\partial \boldsymbol{u}^*_{t+l}}&
\frac{\partial \boldsymbol{f}^{\prime}}{\partial \boldsymbol{u}^*_{t+l}}
\end{bmatrix}
\begin{bmatrix}
I & \boldsymbol 0 \\
\frac{\partial \boldsymbol\phi^{\prime}}{\partial \boldsymbol{x}^*_{ {t+l+1}}}&
\frac{\partial \boldsymbol{f}^{\prime}}{\partial \boldsymbol{x}^*_{ {t+l+1}}}
\end{bmatrix}. \label{prooflemma2_equ1}
\end{align}
If $\det(\frac{\partial \boldsymbol{f}}{\partial \boldsymbol{x}^*_{ {t+l+1}}})\neq 0$, the last block matrix in (\ref{prooflemma2_equ1}) is non-singular. Consequently
\begin{align} \label{prooflemma2_equ2}
\rank \boldsymbol{H}(t,l+1)
&=\rank \begin{bmatrix}
\boldsymbol{H}_{1}(t,l) &\boldsymbol{H}_{2}(t,l) \\
\frac{\partial \boldsymbol\phi^{\prime}}{\partial \boldsymbol{u}^*_{t+l}}&
\frac{\partial \boldsymbol{f}^{\prime}}{\partial \boldsymbol{u}^*_{t+l}}
\end{bmatrix} \nonumber\\
&\geq \rank \begin{bmatrix}
\boldsymbol{H}_1(t,l) & \boldsymbol{H}_2(t,l)
\end{bmatrix} \nonumber\\
&= \rank \boldsymbol{H}(t,l). 
\end{align}
Note that both (\ref{prooflemma2_equ1}) and the inequality (\ref{prooflemma2_equ2}) are independent of the choice of $\boldsymbol{\phi}$. This completes the proof. $\qed$

\section{Proof of Lemma \ref{lemma3}} \label{prooflemma3}
We first prove (\ref{rankbound}). Without losing generality, we consider  the feature set in (\ref{rm_features}). For any trajectory segment  $\boldsymbol{\xi}_{t: {t+l}}\subseteq\boldsymbol{\xi}$,  from  (\ref{recoveryequationbyH}), we have known that there exists a costate $\boldsymbol\lambda_{ {t+l+1}}^*$  such that 
\begin{equation}\label{pftheo1_Hsolution1}
\boldsymbol{H}(t,l)
\begin{bmatrix}
\bar{\boldsymbol{\omega}}\\
\boldsymbol \lambda_{ {t+l+1}}^*
\end{bmatrix}=\boldsymbol{0}
\end{equation} holds,  where $\bar{\boldsymbol\omega}\neq \boldsymbol{0}$ is defined in (\ref{rm_weightvector}). Thus, the nullity of $\boldsymbol{H}(t,l)$ is at least one, which means
\begin{equation*}
\rank \boldsymbol{H}(t,l) \leq r+n-1.
\end{equation*}

We then prove (\ref{rankbound2}). When another relevant feature subset $\mathcal{\widetilde F}$   exists in $\mathcal{F}$ with associated weight vector $\boldsymbol{\widetilde{\omega}}$, we can similarly construct a weight vector $\breve{\boldsymbol\omega}$ corresponding to $\col\mathcal{F}$ as in 
(\ref{rm_weightvector}); that is, the weights in $\breve{\boldsymbol\omega}$ that correspond to  $\mathcal{\widetilde F}$ are from $\boldsymbol{\widetilde{\omega}}$ and otherwise zeros. Then  following the similar derivations as from (\ref{costfun_ex}) to (\ref{recoveryequationbyH}), we can obtain that there exists  $\breve{\boldsymbol\lambda}_{ {t+l+1}}\in \mathbb{R}^n$ such that 
\begin{equation}\label{pftheo1_Hsolution2}
\boldsymbol{H}(t,l)\begin{bmatrix}
\breve{\boldsymbol\omega}\\
\breve{\boldsymbol\lambda}_{ {t+l+1}}
\end{bmatrix}=\boldsymbol{0}.
\end{equation}
Since $\mathcal{\widetilde F}\neq\mathcal{F^*}$ or  $\boldsymbol{\widetilde \omega}\neq c_1\boldsymbol{\omega}$ implies $\breve{\boldsymbol\omega}\neq  c_2 \bar{\boldsymbol{\omega}}$ ($c_1$ and $c_2$ are some nonzero scalars),  based on (\ref{pftheo1_Hsolution1}) and  (\ref{pftheo1_Hsolution2}), it follows that the nullity of $\boldsymbol{H}(t,l)$ is at least two, i.e.,
\begin{equation*}
\rank \boldsymbol{H}(t,l) \leq  r+n-2.
\end{equation*}
This completes the proof. $\qed$
\end{document}